\documentclass{article}

\PassOptionsToPackage{square, numbers}{natbib}

\usepackage[preprint]{neurips_2025}

\usepackage[utf8]{inputenc} 
\usepackage[T1]{fontenc}    
\usepackage[colorlinks=true, allcolors=blue]{hyperref}
\usepackage{url}            
\usepackage{booktabs}       
\usepackage{amsfonts}       
\usepackage{nicefrac}       
\usepackage{microtype}      
\usepackage[table]{xcolor}         
\usepackage{amsmath}
\usepackage{amsthm}
\usepackage{newtxmath}
\usepackage{multirow}
\usepackage{booktabs}
\usepackage{wrapfig}
\usepackage{enumitem}
\usepackage{subcaption}

\usepackage{listings}
\usepackage{xcolor} 
\definecolor{palered}{gray}{0.95}
\definecolor{lightgray}{gray}{0.6}

\usepackage{cleveref}

\makeatletter
\crefname{section}{\S\@gobble}{\S\@gobble}
\crefname{subsection}{\S\@gobble}{\S\@gobble}
\crefname{proposition}{Prop.}{Props.}
\crefname{figure}{Fig.}{Figs.}
\crefname{table}{Table}{Tables}
\makeatother

\usepackage[textsize=tiny]{todonotes}

\newcommand{\std}[1]{\text{\scriptsize{$\pm #1$}}}

\newtheorem{theorem}{Theorem}[section]

\title{Adaptive Destruction Processes for Diffusion Samplers}

\author{%
 Timofei Gritsaev\thanks{Correspondence to \texttt{tgritsaev@constructor.university}.}\\
 Constructor University\\HSE University
 % \\\texttt{tgritsaev@constructor.university}
 \And
 Nikita Morozov\\
 HSE University
% \\\texttt{nvmorozov@hse.ru}
 \And
 Kirill Tamogashev\\
 University of Edinburgh
 % \\\texttt{K.Tamogashev@sms.ed.ac.uk}
 \And
 Daniil Tiapkin\\
 CMAP, CNRS, \'Ecole polytechnique\\
 LMO, Universit\'e Paris-Saclay
 % \\\texttt{daniil.tiapkin@polytechnique.edu}
 \And
 Sergey Samsonov\\
 HSE University
 % \\\texttt{svsamsonov@hse.ru}
 \And
 Alexey Naumov\\
 HSE University
 % \\\texttt{anaumov@hse.ru}
 \And
 Dmitry Vetrov\\
 Constructor University
 % \\\texttt{dvetrov@constructor.university}
 \And
 Nikolay Malkin\\
 University of Edinburgh
 % \\\texttt{nmalkin@ed.ac.uk}
}

\newcommand{\R}{\mathbb{R}} 
\newcommand{\E}{\mathbb{E}}
\newcommand{\dt}{\Delta t}

\newcommand{\dd}{\mathrm{d}}

\newcommand{\eg}{\emph{e.g.}}
\newcommand{\ie}{\emph{i.e.}}

\makeatletter

\renewcommand{\section}{%
  \@startsection{section}{1}{\z@}%
                {-1.5ex \@plus -0.2ex \@minus -0.2ex}%
                { 1.0ex \@plus  0.2ex \@minus  0.2ex}%
                {\large\bf\raggedright}%
}
\renewcommand{\subsection}{%
  \@startsection{subsection}{2}{\z@}%
                {-1.5ex \@plus -0.2ex \@minus -0.2ex}%
                { 0.25ex \@plus  0.2ex}%
                {\normalsize\bf\raggedright}%
}
\renewcommand{\paragraph}{%
  \@startsection{paragraph}{4}{\z@}%
                {0.ex}%
                {-1em}%
                {\normalsize\bf}%
}

%
% \newcommand{\zerodisplayskips}{%
%   \setlength{\abovedisplayskip}{1mm}%
%   \setlength{\belowdisplayskip}{1mm}%
%   \setlength{\abovedisplayshortskip}{0.5mm}%
%   \setlength{\belowdisplayshortskip}{0.5mm}}
% \appto{\normalsize}{\zerodisplayskips}
% \appto{\small}{\zerodisplayskips}
% \appto{\footnotesize}{\zerodisplayskips}

\makeatother

\begin{document}

\maketitle

\begin{abstract}

    \looseness=-1
    This paper explores the challenges and benefits of a trainable destruction process in diffusion samplers -- diffusion-based generative models trained to sample an unnormalised density without access to data samples. Contrary to the majority of work that views diffusion samplers as approximations to an underlying continuous-time model, we view diffusion models as discrete-time policies trained to produce samples in very few generation steps. We propose to trade some of the elegance of the underlying theory for flexibility in the definition of the generative and destruction policies. In particular, we decouple the generation and destruction variances, enabling both transition kernels to be learned as unconstrained Gaussian densities. We show that, when the number of steps is limited, training both generation and destruction processes results in faster convergence and improved sampling quality on various benchmarks. Through a robust ablation study, we investigate the design choices necessary to facilitate stable training. Finally, we show the scalability of our approach through experiments on GAN latent space sampling for conditional image generation.
\end{abstract}

\section{Introduction}

Probabilistic inference is concerned with sampling from and estimating statistics of distributions defined by an unnormalised density.
Given an energy function $\mathcal{E}:\R^d \to \R$, we want to sample from:
\begin{equation}
    p_{\rm target}(x) = \frac{e^{-\mathcal{E}(x)}}{Z}; \quad Z = \int_{\R^d} e^{-\mathcal{E}(x)}\,\dd x
\end{equation}
and estimate the normalising constant $Z$.

Monte Carlo methods like AIS \cite{neal1998annealedimportancesampling} or Hamiltonian MCMC \cite{neal2011mcmc,hoffman2014no} operate by evolving a family of particles using a tractable proposal kernel, which in the limit (of infinite time or number of particles) converge to the target distribution.
However, such methods may require many sampling steps to converge or need many particles to make the bias sufficiently small.
In addition, some of them rely on access to $\nabla_x\mathcal{E}(x)$, which may not always be available. Some methods use simple distributions with adaptive parameters \cite{bugallo2017adaptive} or neural models \cite{samsonov2022local,midgley2022flow} as proposals within Monte Carlo methods. This allows to increase the convergence speed of the chain and improve the sampling quality. However, fully amortised algorithms such as Boltzmann generators \cite{noe2019boltzmann} are trained to directly produce samples from the target distribution without a Monte Carlo outer loop.

The successes of diffusion models as distribution approximators \cite{sohl2015diffusion, ho2020denoising, song2021scorebased} provide motivation for applying diffusion-based techniques for fully amortised sampling. Building on these successes, diffusion samplers (see \Cref{sec:diffusion_samplers}) were proposed as a more scalable alternative to classic MCMC algorithms. Unlike diffusion models, they do not assume access to samples from the target density and are suitable for cases when the target distribution is represented by a black-box energy function. 

There are many algorithms for training diffusion samplers (see \Cref{sec:diffusion_samplers} and related work in \Cref{apx:related}). One approach, first explored in \cite{zhang2022path,vargas2023denoising}, is based on the theory of stochastic processes and views diffusion samplers as approximators to continuous-time stochastic dynamics. Another approach replaces continuous dynamics with a time-discrete Markov chain and trains the diffusion sampler as a discrete policy using reinforcement learning objectives \cite{lahlou2023theory}.
It has been shown that in the continuous-time limit the two approaches coincide \cite{berner2025discrete}: discrete diffusion samplers approximate continuous ones well if the number of discretisation steps is sufficiently large. 
In practice, though, it is desirable to make the number of steps as small as possible to make sampling faster. 
However, a continuous-time diffusion sampler discretised too coarsely will no longer model the target distribution accurately. 

The assumption of underlying continuous-time dynamics necessitates constraints on the generative and destruction processes: for example, their diffusion coefficients must coincide (see \Cref{apx:incompar_proc}). Surprisingly, discrete-time diffusion samplers have more flexibility: both generation and destruction transition kernels can be flexibly learned, for example, as Gaussians with mean and variance parametrised by neural networks. In this paper, we take advantage of this flexibility to train both generation and destruction processes, obtaining faster and more accurate samplers. 
We study different objectives for the generation and destruction processes and explore possible techniques for stabilising training. In summary, this paper makes the following findings:
\begin{enumerate}[left=0pt,nosep,label=(\arabic*)]
   \item Training both the generation and destruction processes of diffusion samplers significantly improves sampling quality and normalising constant estimation across various benchmarks, particularly when the number of sampling steps is small or the energy landscape has narrow modes.
   \item Sampling highly benefits from the variance of the generation process being learned and decoupled from that of the destruction process -- a unique feature of the discrete-time formulation.
   \item Successful joint training of both processes in diffusion samplers hinges on appropriate model architecture and selection of training objectives. We point out practical guidelines for stabilising the joint training of generation and destruction processes -- such as shared backbones, separate optimisers, target networks, and prioritised replay buffers -- and validate them through a comprehensive ablation study.
   \item Finally, our approach is scalable to higher-dimensional problems, as shown in  experiments on text-conditional sampling in the latent space of a pretrained StyleGAN3~\cite{karras2021alias}, demonstrating quantitative and qualitative benefits of training the destruction process.
\end{enumerate}

\section{Background}

\subsection{Diffusion samplers in continuous time}
\label{sec:diffusion_samplers}

We consider the problem of sampling from a distribution over $\mathbb{R}^d$ with density
$p_{\text{target}}(x) = e^{-\mathcal{E}(x)} / Z; $ where $ Z = \int e^{-\mathcal{E}(x)}\,\dd x$ is a (usually intractable) normalising constant and 
$ \mathcal{E} : \mathbb{R}^d \to \mathbb{R}$ is an energy function.\footnote{For absolutely continuous densities, we abuse notation and use the distribution and density interchangeably. Further, all claims about SDEs hold under basic regularity conditions, \eg, those in \cite[][\S B.1]{berner2025discrete}.} We have access only to $\mathcal{E}(x)$. 
We want to sample from $p_{\text{target}}$ by simulating the dynamics of a process characterised by a (forward or generative) It\^o SDE:
\begin{equation}    
    \dd \overrightarrow{X_t} = \overrightarrow{f_\theta}(\overrightarrow{X_t}, t)\,\dd t + g(t)\,\dd\overrightarrow{W_t},\quad \overrightarrow{X_0}\sim p_0,
\label{eq:fsde}
\end{equation}
where $p_0$ is a tractable distribution (\eg, Gaussian or Dirac).
The drift function $\overrightarrow{f_\theta}$ is a neural network with parameters $\theta$, and we want to find $\theta$ such that $\overrightarrow{X_1}\sim p_{\rm target}$, \ie, the terminal samples $\overrightarrow{X_1}$ are distributed as $p_{\rm target}$. 
There may be many SDEs of the form in~\eqref{eq:fsde} stochastically transporting $p_0$ to $p_{\rm target}$.
To set a learning target for $\theta$, we additionally assume a backward (or destructive) SDE (fixed or also trainable):
\begin{equation}    
    \dd \overleftarrow{X_t} = \overleftarrow{f_\varphi}(\overleftarrow{X_t}, t)\,\dd t + g(t)\,\dd \overleftarrow{W_t},\quad \overleftarrow{X_1}\sim p_{\rm target}.
\label{eq:bsde}
\end{equation}
Given a fixed process~\eqref{eq:bsde}, there is a unique forward SDE of the form~\eqref{eq:fsde} that defines the same process, \ie, the processes $\overrightarrow{X_t}$ and $\overleftarrow{X_t}$ coincide. 
In particular, the marginal distributions at time 1 would then coincide, so $\overrightarrow{X_1}\sim p_{\rm target}$.
Thus, training objectives aim to enforce \emph{time reversal} -- to minimise some divergence between the generation and destruction processes with respect to their parameters.

One approach to solving this problem, related to continuous-time stochastic control, involves writing a divergence functional defined in continuous time. This functional may compute a discrepancy between path space measures (\ie, distributions over trajectories, see \cite{nusken2021solving,vargas2022bayesian,zhang2022path,vargas2024transport,berner2024optimal}), which can then be approximately minimised in a time discretisation. Alternatively, one can use time-local objectives involving the processes' marginal distributions at intermediate times $t$, derived from the Fokker-Planck-Kolmogorov or Hamilton-Jacobi-Bellman PDEs (see \cite{nusken2023interpolating,mate2023learning, sun2024physicsinformed}).

Another approach is to immediately replace the generation and destruction SDEs by corresponding discrete-time Markov chains and to learn the dynamics given a fixed number of steps. The discretisation may be defined by Euler-Maruyama integration of the SDEs (see \cref{sec:training_discrete_time}). One can then borrow objectives from reinforcement learning to enforce time reversal of discrete-time processes.
This approach was notably explored in \cite{lahlou2023theory,sendera2024improved} and is in theory more general, as it allows more flexibility in the choice of transition kernels \cite{phillips2024metagfn,volokhova2024towards}.

The two approaches become equivalent in the continuous-time limit: as the time discretisation becomes finer, objectives for training discrete-time transition kernels approach their continuous-time counterparts, justifying the use of varying time discretisations for training \cite{berner2025discrete}.
However, learning in continuous time presents a notable limitation: it necessitates that variances of both generation and destruction
processes remain the same. Otherwise, divergence functionals cannot be defined, as we demonstrate in~\Cref{apx:incompar_proc}.
In \Cref{sec:relaxation}, we will study how the assumption of fixed variances can be relaxed in the discrete-time approach and how this can be beneficial for modelling.

\subsection{Diffusion samplers in discrete time}
\label{sec:training_discrete_time}

In order to learn the time-discrete variant of~\eqref{eq:fsde} we use a $T$-step Euler-Maruyama scheme to obtain a Markov chain: $X_0 \rightarrow X_{\Delta t} \rightarrow \ldots \rightarrow X_1;\; \Delta t = 1 / T$.\footnote{For notational clarity we assume a uniform discretisation, but all derivations hold for variable time steps.} The discrete dynamics can be written as:
\begin{equation}
    X_{t + \dt} = 
    X_{t} + f_{\theta}(X_t, t)\,\dt + g(t) \sqrt{\dt} \, \xi_t, \quad X_0 \sim p_0(X_0),\; \xi_t \sim \mathcal{N}(0, I_d) 
    \label{eq:transition}
\end{equation}
with the conditional probability density for each timestep being
\begin{equation}
    \overrightarrow{p_\theta}(X_{t + \dt} \mid X_t) = \mathcal{N} ( 
        X_{t + \Delta t} ; X_{t} + f_{\theta}(X_t, t)\,\dt, \: g(t)^2 \dt \,I_d
    )
    \label{eq:transtion_density}
\end{equation}
We can now write the distribution over generation trajectories in the following way:
\begin{equation}
    \overrightarrow{p_\theta}(X_0, X_{\Delta t}, \ldots, X_1) = p_0(X_0)\overrightarrow{p_\theta}(X_{\dt, \ldots, 1} \mid X_0)
    = p_0(X_0) \prod_{i=1}^T \overrightarrow{p_\theta}(X_{i\dt} \mid X_{(i-1)\dt}) 
    \label{eq:trajectory_density_f}
\end{equation}
We can similarly\footnote{The reverse-time discretisation can use the reverse Euler-Maruyama scheme, or (as in \cite{zhang2022path} and other works, including ours) the reversal of a forward-time Euler-Maruyama scheme, which may not coincide -- in particular, this yields the multiplier $\frac{t}{t+\Delta t}$ on the variance in \eqref{eq:transtion_density_b_fixed}. The two coincide as $\Delta t\to0$; see, \eg, \cite[][Prop.\ 3.5]{berner2025discrete}.} discretise the destruction SDE~\eqref{eq:bsde} to obtain a time-reversed Markov chain with transition kernel $\overleftarrow{p_\varphi}$:
\begin{equation}
    \overleftarrow{p_\varphi}(X_0, X_{\Delta t}, \ldots, X_1) = 
    p_{\text{target}}(X_1)\overleftarrow{p_\varphi}(X_{0 , \ldots, (T - 1)\dt} \mid X_1)
    = p_{\text{target}}(X_1) \prod_{i=1}^T \overleftarrow{p_\varphi}(X_{(i-1)\dt} \mid X_{i\dt}),
\end{equation}
where if $p_0$ is Dirac the transition $\overleftarrow{p_\varphi}(X_0\mid X_{\Delta t})$ is understood to be Dirac as well (and both are taken to have density 1 in the density ratios representing Radon-Nikodym derivatives in \cref{sec:objectives}). When both generation and destruction kernels are trained we expect the following equality to hold: 
\begin{equation}
    p_0(X_0)\overrightarrow{p_\theta}(X_{\dt, \ldots, 1} \mid X_0) =
    p_{\text{target}}(X_1)\overleftarrow{p_\varphi}(X_{0 , \ldots, (T - 1)\dt} \mid X_1)
    \label{eq:tb} 
\end{equation}
or $p_0 \overrightarrow{p_\theta} = p_{{\rm target}} \overleftarrow{p_\varphi}$ for brevity.

The majority of past work (see \Cref{apx:related})
focuses on learning the generation process given a fixed destruction process, although some \cite{richter2024improved} derive objectives for optimising both processes. In this work we propose to exploit the aforementioned flexibility of time-discrete formulation of diffusion samplers to parametrise both drift and variance of generation and destruction processes using neural networks. We further detail our approach in \Cref{sec:objectives} and show experimental findings in \Cref{sec:experiments}.

\section{Related work}
\label{apx:related}

{
\paragraph{Diffusion samplers.} While diffusion models were first introduced with the goal of approximating a distribution from which samples are available by an iterative denoising process~\cite{sohl2015diffusion,ho2020denoising,song2021scorebased}, \emph{diffusion samplers} -- on which the modern line of work started with \cite{vargas2022bayesian,zhang2022path,vargas2023denoising} and continues to rapidly expand \cite[][\emph{inter alia}]{vargas2024transport,richter2024improved,akhoundsadegh2024iterated,sendera2024improved,blessing2025underdamped,havens2025adjoint} -- seek to amortise the cost of sampling from a given target density $p_{\text{target}}$, which can be queried for the energy and possibly its gradient, by training a generation process to approximate it. Algorithms such as PIS~\cite{zhang2022path} and DDS~\cite{vargas2023denoising} achieve this by minimising the KL divergence between destruction and generation processes. Other divergences have been considered for their better numerical properties, for example, \cite{richter2024improved} uses a second-moment divergence~\cite{richter2020vargrad}. In \cite{lahlou2023theory,zhang2022unifying}, diffusion samplers are understood as a generalised instance of GFlowNets \cite{bengio2021flow}, which are deep reinforcement learning algorithms that allow for using exploration techniques and off-policy training and do not require access to the gradients of $p_{\text{target}}$ (although physics-informed parametrisations may use this gradient). Subsequent work \cite{zhang2024diffusion, sendera2024improved, kim2025adaptive} has applied training objectives from GFlowNet literature~\cite{malkin2022trajectory, madan2023learning,pan2023better} to the diffusion sampling case and explored the benefits of the flexible off-policy training that GFlowNets allow \cite{sendera2024improved,phillips2024metagfn,kim2025adaptive}. Finally, \cite{berner2025discrete} builds theoretical connections among various diffusion sampling objectives and their continuous-time limits.

\paragraph{GFlowNets.} Generative flow networks~\cite[GFlowNets;][]{bengio2021flow, bengio2023gflownet} have been introduced as a general framework for sampling from distributions by solving a sequential decision-making problem, training a stochastic policy to construct objects step by step. GFlowNets represent a synthesis of variational inference and reinforcement learning (RL) paradigms~\cite{malkin2023gflownets, zimmermann2023a, tiapkin2024generative, deleu2024discrete}. Beyond their original application in biological structure discovery \cite[\eg,][]{jain2022biological,shen2024tacogfn,cretu2025synflownet}, GFlowNets have found uses in probabilistic inference over symbolic latent variables \cite{deleu2022bayesian,van2023nesi,hu2023gfnem,deleu2023jsp,zhou2024phylogfn,silva2024streaming}, combinatorial optimisation \cite{zhang2023robust,zhang2023let,kim2025ant}, and reasoning or planning in language \cite{hu2024amortizing,song2024latent,lee2025learning,younsi2025accurate}.
Although initially defined for discrete spaces, the framework can be extended to continuous domains~\cite{lahlou2023theory} and non-acyclic state spaces (\ie, nonconstrutive actions) \cite{brunswic2024theory,morozov2025revisiting}. GFlowNets approach the training of amortised samplers from a RL perspective and correspondingly require solving the RL challenges of exploration~\cite{rector2023thompson, sendera2024improved, phillips2024metagfn, madan2025towards, kim2025adaptive}, credit assignment~\cite{malkin2022trajectory, pan2023better, jang2024learning}, and generalisation \cite{silva2025when,atanackovic2025investigating}. Relevant to our work, design and training of the destruction process in \emph{discrete} problems is a focus of \cite{malkin2023gflownets,mohammadpour2024maximum,jang2024pessimistic,gritsaev2025optimizing}, where learning the destruction process is shown to improve convergence and mode discovery.

\looseness=-1
\paragraph{Learning the destruction process in diffusion.} A recent line of work~\cite{bartosh2024neuraldiffusion, bartosh2024neuralflow, sahoo2024diffusion, nielsen2024diffenc} studies learning the destruction process (called the `forward' process, clashing with the opposite convention in place for diffusion samplers) for diffusion models in image domains, instead of using a fixed one, such as VP or VE SDE~\cite{song2021scorebased}. 
This results in a broader class of models and is shown to improve log-likelihood and visual quality of the generated samples, as well as reducing the required number of training steps. GFlowNet objectives were applied to this problem in \cite{lahlou2023theory,zhang2022unifying}. Learning the destruction process for diffusion models was also studied in discrete domains, \eg, \cite{kong2023autoregressive} learns the order in which nodes are removed from a graph by the destruction process. 

Our work is also adjacent to the \textbf{Schr\"odinger bridge (SB) problem}. In continuous time, the learning of destruction and generation processes that stochastically transport a source distribution $p_0$ to $p_{\rm target}$ is known as a \emph{bridge problem}. The SB problem seeks the unique bridge that is closest to some reference process and can be solved by algorithms based on iterative proportional fitting in a time discretisation \cite[IPF;][]{vargas2021solving,debortoli2021diffusion} or various other methods \cite{chen2022likelihood,stromme2023sampling,shi2023dsbm,tong2024simulation}. Unlike diffusion samplers, these algorithms typically assume samples from $p_{\rm target}$ are available. Many of these algorithms, when typical approximations are made in training, at convergence yield bridges that are not necessarily the SB. In this work, we constrain the destruction process to be a bridge from $p_{\rm target}$ to $p_0$, meaning that in the continuous-time limit iterative proportional fitting would converge in a single step. However, the discrete-time TLM update for the destruction process (\Cref{sec:objectives}), which trains $\overleftarrow{p_\varphi}$ on trajectories sampled from $\overrightarrow{p_\theta}$, is in fact equivalent to an (unconverged) maximum-likelihood IPF step as in \cite{vargas2021solving}.
}

\section{Methodology}

\subsection{Relaxing constraints on transitions}
\label{sec:relaxation}

We first propose to relax the constraint that the generation and destruction processes arise from a pair of SDEs \eqref{eq:fsde} and \eqref{eq:bsde} with fixed time-dependent variance. We begin with the destruction process that is the reversal of time-discretised Brownian motion with fixed diffusion coefficient $\sigma$ starting from $p_0=\delta_0$, as considered by \cite{zhang2022path} and many later works. For this process, the transitions have the form:
\begin{align}
    \overrightarrow{p_\theta}(X_{t + \dt} \mid X_t) &= \mathcal{N} ( 
        X_{t + \Delta t} ; X_{t} + f_{\theta}(X_t, t)\,\dt, \: \sigma^2 \dt \,I_d
    ),
    \label{eq:transtion_density_f_fixed}
    \\
    \overleftarrow{p}(X_{t} \mid X_{t + \dt}) &= \mathcal{N} \left( 
        X_{t} ; \frac{t}{t+\dt} X_{t + \dt} , \: \frac{t}{t+\dt}\sigma^2 \dt \,I_d
    \right),
    \label{eq:transtion_density_b_fixed}
\end{align}
where $\overleftarrow{p}(x_0\mid x_{\dt})$ is understood as Dirac $\delta_0(x_0)$.

We relax this constraint and allow both generation and destruction processes to be Gaussian with arbitrary means and variances, while maintaining the requirement that the destruction process leads to $p_0$ at the last step. This is done by learning the generation and destruction means and variances as corrections to those in \eqref{eq:transtion_density_f_fixed} and \eqref{eq:transtion_density_b_fixed}, respectively. By introducing corrections we get the following generative transition kernel:
\begin{align}
    \overrightarrow{p_\theta}(X_{t + \dt} \mid X_t) &= \mathcal{N} ( 
        X_{t + \Delta t} ; X_{t} + f_{\theta}(X_t, t)\,\dt, \: \text{diag}(\textcolor{blue}{\gamma_\theta(X_t, t)})\sigma^2 \dt \,I_d
    ), \nonumber \\
    \gamma_\theta(X_t, t) &= \exp\left\{C_1 \tanh\left({\text{NN}^{(1)}_\theta(X_t, t)}\right)\right\}
    \label{eq:transtion_density_f_learnable}
\end{align}
and the following destruction kernel:
\begin{align}
    \overleftarrow{p_\varphi}(X_{t} \mid X_{t + \dt}) &= \mathcal{N} \left( 
        X_{t} ; \text{diag}(\textcolor{blue}{\alpha_{\varphi}(X_t, t)}) \, \frac{t}{t+\dt} X_{t + \dt} , \: \text{diag}(\textcolor{blue}{\beta_{\varphi}(X_t, t)})\frac{t}{t+\dt}\sigma^2 \dt \,I_d
    \right), \nonumber \\
    \alpha_{\varphi}(X_t, t) &= 1 + C_2 \tanh\left(\text{NN}^{(2)}_\varphi(X_t, t)\right), \;\; 
    \beta_{\varphi}(X_t, t) = 1 + C_2 \tanh\left(\text{NN}^{(3)}_\varphi(X_t, t)\right),
    \label{eq:transtion_density_b_learnable}
\end{align}
where the $\text{NN}^{(i)}$ represent neural networks with $d$-dimensional outputs and $C_1, C_2$ are constants. Such a parametrisation guarantees that the generation process variance remains between $\frac{1}{C_1}$ and $C_1$ times that of the baseline one in \eqref{eq:transtion_density_f_fixed}, while the multiplicative deviations of the destruction process mean and variance from the basic ones in \eqref{eq:transtion_density_b_fixed} are bounded between $1 - C_2$ and $1 + C_2$. We empirically found such a combination to work better than using the same parametrisation for both generation and destruction processes. This parametrisation does not impose any other constraints on the generation and destruction processes -- they are no longer discretisations of SDEs defining absolutely continuous path space measures.

\looseness=-1
In theory, the transitions could be more general than Gaussian, as long as their density is tractable to sample and differentiate. The flexibility of this approach allows easy generalisation to other distributions, such as mixtures of Gaussians, transition kernels on manifolds, or even non-parametric distributions.

\paragraph{Why train the variance of the generation process?} When the number of time discretisation steps is large, the reverse of a fixed destruction process $\overleftarrow{p_\varphi}$ is well approximated by Gaussian transitions with the same variances. However, when the number of steps is small, the discrete-time generation process should be able to adapt to the discretisation of the destruction process. A preliminary experiment in~\cite{sendera2024improved} showed that allowing for the variance of the generation process to be learned when the destruction process is fixed can improve sampling quality. Thus, we consider the combination of a fixed destruction process and generation process with learnable mean and variance as one of the baselines in our experiments.

\paragraph{Why train the destruction process?} The ability to train the destruction process increases the degrees of freedom in the feasible space of solutions $(p_0\overrightarrow{p_\theta}, p_{\rm target} \overleftarrow{p_\varphi})$ when minimising the discrepancy in the time-reversal condition~\eqref{eq:tb}. Modulo optimisation errors, the solution with a trainable destruction process is guaranteed to be at least as good as the one with a fixed destruction process: the latter is a special case of the former, where the learned correction to the destruction process is zero.

\subsection{Training objectives for generation and destruction processes}
\label{sec:objectives} 

Learning discrete Markov process requires optimising a divergence between two distributions $\mathbb{D}(p_0\overrightarrow{p_\theta}\|p_{{\rm target}} \overleftarrow{p_\varphi})$ that enforces equality in \eqref{eq:tb}. One option can be a reverse KL divergence: \begingroup
\allowdisplaybreaks
\begin{align}
    \mathbb{D}_{\rm KL}(p_0\overrightarrow{p_\theta}\|p_{{\rm target}} \overleftarrow{p_\varphi}) &= {\rm KL} \left(
        p_0(X_0)\overrightarrow{p_\theta}(X_{\Delta t, \ldots, 1} \mid X_0) \|
        p_{\text{target}}(X_1)\overleftarrow{p_\varphi}(X_{0 , \ldots,(T - 1)\dt} \mid X_1) 
    \right) \nonumber \\
    &= \mathbb{E}_{X_{0, \Delta t,\ldots, 1} \sim \overrightarrow{p_\theta}(X_{0,\Delta t, \ldots, 1} )} \left[ 
        \log\frac{p_0(X_0)\overrightarrow{p_\theta}(X_{\Delta t, \ldots, 1} \mid X_0)}
        {\exp(-\mathcal{E}(X_1))\overleftarrow{p_\varphi}(X_{0 , \ldots,(T - 1)\dt} \mid X_1)}
    \right]+\log Z.
\label{eq:rev_kl}
\end{align}
\endgroup

Here the log-normalising constant $\log Z$ does not affect the optimisation. Methods that propose to optimise this objective include \cite{zhang2022path, vargas2023denoising}. They use the reparametrisation trick to rewrite~\eqref{eq:rev_kl} as an expectation over the noises used in integration (the $\xi_t$ in~\eqref{eq:transition}). Although this scheme yields an unbiased estimator, it requires backpropagating though the simulation of the generation process. Alternatively, one can use second-moment divergences of the form
\begin{equation}
    \!\!\!\!\!\!\mathbb{D}_{\tilde p}(p_0\overrightarrow{p_\theta}\|p_{{\rm target}} \overleftarrow{p_\varphi})
    =
    \E_{X_{0,\Delta t, \ldots, 1} \sim \tilde{p}(X_{0,\Delta t, \ldots, 1} )} \left[ \log\dfrac{p_0(X_0)\overrightarrow{p_\theta}(X_{\Delta t, \ldots, 1} \mid X_0)}{\exp(-\mathcal{E}(X_1))\overleftarrow{p_\varphi}(X_{0 , \ldots,(T - 1)\dt} \mid X_1) }+\log\hat{Z}\right]^2
    \label{eq:second_moment}
\end{equation}
where $\tilde p$ is some full-support proposal distribution not necessarily equal to $\overrightarrow{p_\theta}$ and $\log\hat{Z}$ is a scalar. This scalar absorbs the (unknown) true normalising constant $Z$ of $p_{\rm target}$, allowing to use the unnormalised log-density $-\mathcal{E}(X_1)$ in place of $\log p_{\rm target}(X_1)=-\mathcal{E}(X_1)-\log Z$ in the denominator. The scalar $\log\hat Z$ can be a learned parameter (yielding the loss called \textbf{trajectory balance} \cite[\textbf{TB};][]{malkin2022trajectory}) or analytically computed to minimise the loss on each batch of training trajectories (yielding the log-variance or \textbf{VarGrad} loss \cite{richter2020vargrad}). In either case, when the loss is minimised to 0, $\log\hat Z$ becomes equal to the true log-partition function $\log Z$.

Unlike the KL divergence~\eqref{eq:rev_kl}, which is an expectation over a distribution depending on $\theta$, second-moment divergences avoid backpropagating through sampling, allowing the use of off-policy exploration techniques to construct the proposal distribution $\tilde p$. On the other hand, when $\tilde p=p_\theta$, the expected gradient of $\mathbb{D}_{\tilde p}$ with respect to $\theta$ -- \emph{but not with respect to $\varphi$} -- remarkably coincides with that of $\mathbb{D}_{\rm KL}$ \cite{malkin2023gflownets}.

The divergence in~\eqref{eq:rev_kl} can be used for optimisation of $\overleftarrow{p_{\varphi}}$ in addition to $\overrightarrow{p_\theta}$. Notice that this is equivalent to log-likelihood maximisation with respect to $\varphi$, since the gradient of~\eqref{eq:rev_kl} takes the form:
\begin{align}
    \nabla_\varphi\mathbb{D}_{\rm KL}(p_0\overrightarrow{p_\theta}\|p_{{\rm target}} \overleftarrow{p_\varphi}) 
    &= \nabla_\varphi\mathbb{E}_{X_{0, \ldots, 1} \sim \overrightarrow{p_\theta}(X_{0, \ldots, 1} )}  \left[ -
        \log \left( \overleftarrow{p_\varphi}(X_{0 , \ldots,(T - 1)\dt} \mid X_1)\right)
    \right] \\
    &= 
    \nabla_\varphi\mathbb{E}_{X_{0, \ldots, 1} \sim \overrightarrow{p_\theta}(X_{0, \ldots, 1} )} \left[ 
        \sum_{i = 1}^T -\log \overleftarrow{p_\varphi} (X_{(i-1)\dt} \mid X_{i\dt}) \
    \right]
\label{eq:tlm}
\end{align}
In the more general GFlowNet setting, such approach for learning the destruction process is called \textbf{trajectory likelihood maximisation} \cite[\textbf{TLM};][]{gritsaev2025optimizing}.

\begin{wraptable}[7]{r}{0.5\linewidth}
\vspace*{-1.25em}
\caption{Summary of the objectives studied for learning the generation and destruction processes.}\vspace*{-0.75em}
\label{tab:objectives}
\centering
\resizebox{\linewidth}{!}{
\begin{tabular}{@{}lcc}
\toprule
 & \multicolumn{2}{c}{$\overrightarrow{p_\theta}$ (generation)} \\
 \cmidrule(lr){2-3}
$\overleftarrow{p_\varphi}$ (destruction)  $\downarrow$ &  $2^{\text{nd}}$ moment           & Reverse KL \\
\midrule
\textit{Fixed} & \cite{richter2024improved,sendera2024improved} & PIS \cite{zhang2022path}, DDS \cite{vargas2023denoising} \\ 
\midrule
$2^{\text{nd}}$ moment         & TB$_{\theta,\varphi}$ & PIS$_\theta$ + VarGrad$_\varphi$ \\
Rev. KL & TB$_\theta$ + TLM$_\varphi$     &  PIS$_\theta$ + TLM$_\varphi$ \\
\bottomrule
\end{tabular}
}
\end{wraptable}

\Cref{tab:objectives} summarises the possible combinations of objectives for destruction and generation policies. In \Cref{sec:experiments} we numerically study the behaviour of the aforementioned objectives. 

\subsection{Techniques for stable joint optimisation}
\label{sec:techniques}

\looseness=-1
We found that simultaneously training generation and destruction processes requires careful design. In this section we review the utilised techniques that facilitate stable and effective training. Detailed descriptions are presented in \Cref{apx:stability_techniques}, and a numerical study is carried out in \Cref{sec:ablation_results}.

\paragraph{Shared backbone.}
It is possible to parametrise generation and destruction processes with separate neural networks consisting of a MLP backbone and linear heads for mean and log-variance or to use a shared backbone with different heads for each process. We motivate the latter parametrisation by the idea that sharing representations between the two processes facilitates faster convergence. This design choice was also explored and motivated in GFlowNet literature~\cite{malkin2022trajectory, gritsaev2025optimizing}.

\paragraph{Separate optimisers.} When representations are shared, the backbone parameters receive gradient signals from both the losses used for training $\theta$ and $\varphi$. Their gradients may have different scales, especially when different training objectives are employed for them, which may cause instability if both gradient updates use the same adaptive-momentum optimiser. Thus, we use separate optimisers for the generation and destruction process losses.

\paragraph{Learning rates.} 
Since the destruction process sets a learning target for the generation process, making it learnable causes this target to become nonstationary, leading to instabilities in optimisation (although the joint optimisation has no saddle point at the global optimum). Thus, tuning relative learning rates is critical for stable training. The same effect was studied in~\cite{gritsaev2025optimizing} for learnable destruction processes in GFlowNets.

\paragraph{Target network.}
Target networks are a well-known RL stabilisation technique \cite{mnih2015human} that replaces regression targets in a loss function by variants smoothed over training time (by a lagging update, \eg, using exponential moving average~\cite{silver2014deterministic}). We use a target network for $\varphi$ when computing the loss gradient for $\theta$ and vice versa.

\paragraph{Prioritised experience replay.}
Replay buffers are another common RL technique that allows to prevent forgetting by resampling previously used trajectories. We use prioritised experience replay \cite[{PER};][]{schaul2016prioritized} with prioritisation by the loss, which was also utilised for GFlowNet training by~\cite{tiapkin2024generative}. The replay ratio is a hyperparameter that determines the number of training trajectories sampled from the replay buffer per one newly sampled trajectory.

\paragraph{Better exploration in off-policy methods.}
Training with second-moment divergences allows a flexible choice of behavior policy $\tilde p$ (see \eqref{eq:second_moment}). We use the existing techniques proposed by \cite{lahlou2023theory,sendera2024improved} to facilitate exploration during training, namely, increasing the variance of the generation process for sampling trajectories for training, as well as local search based on Langevin dynamics. 

\subsection{Connection to reinforcement learning}
\label{sec:rl_connection}

In this section we briefly discuss a fact that provides important motivation and context for some of the techniques mentioned in \Cref{sec:techniques}: is possible to formulate the training of a diffusion sampler in discrete time as an RL problem with entropy regularisation (also called soft RL \cite{neu2017unified, geist2019theory}). Although, up to our knowledge, it was never directly stated in this form in previous literature on diffusion samplers, this fact is well-understood, and a number of works on diffusion samplers frame their training as a stochastic control task in continuous time~\cite{zhang2022path,vargas2024transport,berner2024optimal}. The equivalence with soft RL in discrete time can be directly shown by combining the analysis of~\cite{tiapkin2024generative, deleu2024discrete}, which show GFlowNet training is  equivalent to a soft RL task, and~\cite{lahlou2023theory, sendera2024improved, berner2025discrete}, which show that discrete-time diffusion samplers are a special case of continuous GFlowNets. We provide the details and the proof in \Cref{apx:soft_rl}. 

This connection gives an important motivation for using such techniques as target networks~\cite{mnih2015human}, replay buffers~\cite{schaul2016prioritized}, and various exploration methods, which are all well-studied in the RL literature, for the training of diffusion samplers. Finally, we mention that \cite{gritsaev2025optimizing} analyses trainable destruction processes in GFlowNets from the RL perspective and theoretically and experimentally stipulates the use of techniques for enforcing stability in the training of the destruction process, such as controlled learning rates and target networks.

\section{Experiments}
\label{sec:experiments}

\subsection{Benchmarking setup on synthetic tasks}
\label{sec:synthetic_tasks}

\paragraph{Energies.} We consider several target densities which are commonly used in the diffusion samplers literature, as well as more challenging variants of these distributions (see \Cref{apx:energies} for details):
\begin{itemize}[left=0pt,nosep]
    \item \textbf{Gaussian mixture models (GMM).} In addition to the simple mixture of 25 regularly spaced Gaussians (\textbf{25GMM}) and 40 irregularly spaced Gaussians (\textbf{40GMM}) in $\R^2$, we also consider a more difficult $\R^3$ variant (\textbf{125GMM}) and a mixture of 25 Gaussians with anisotropic covariance matrices (\textbf{(Slightly) Distorted 25GMM}). 
    \item \textbf{Funnel} \cite{neal2003slice}. We consider two 10-dimensional funnel distributions from past work. Two versions (termed \textbf{Easy/Hard})  have been considered in the literature due to an error in the definition originating in \cite{zhang2022path} (see \cite[][\S B.1]{sendera2024improved}); we study both.
    \item \textbf{Manywell} \cite{noe2019boltzmann}. This is a 32-dimensional distribution with $2^{16}$ modes that are separated by energy barriers, making it challenging for diffusion samplers to avoid mode collapse. Just as for GMMs, we also define a \textbf{Distorted} variant that does not factor into 16 independent 2D distributions.
\end{itemize}

\paragraph{Metrics.} Following the literature, we primarily use ELBO and EUBO metrics to evaluate the quality of trained samplers~\cite{blessing2024beyond}. The two metrics lower- and upper-bound the log-partition function of the target distribution, respectively, and EUBO requires access to ground truth samples, which can be obtained exactly for GMMs and Funnel and approximately for Manywell. (See \Cref{sec:metrics} for definitions.) We also compute the 2-Wasserstein distance between generated and ground truth samples as a more direct measure of sample quality. 

\paragraph{Algorithms compared.} We use two objectives for optimisation of the generation process: trajectory balance (TB$_\theta, $~\cite{malkin2022trajectory}) and reverse KL (equivalent to PIS~\cite{zhang2022path} and so denoted PIS$_\theta$). We compare four settings with each generation process objective: 
\begin{itemize}[left=0pt,nosep]
    \item Fixed generation variances, fixed destruction process (as in most prior work);
    \item Learnable generation variances, fixed destruction process;
    \item Learnable generation variances, destruction process learned using reverse KL (TLM$_\varphi$) or TB (TB$_\varphi$). (When the generation process is learned using reverse KL, we replace TB$_\varphi$ by VarGrad$_\varphi$, since the log-normalising constant is not learned.)
\end{itemize}

\paragraph{Training.} We train all samplers with a varying number of time discretisation steps $T$. The same number of steps is used for training and for sampling. We use uniform time discretisation for all environments except mixtures of Gaussians, which use a harmonic discretisation scheme~\cite{berner2025discrete}. Notably, unlike some past work, we do \emph{not} use the physics-informed Langevin parametrisation from \cite{zhang2022path} in the generation process drift, so as to reduce training cost and study the effects of our modifications in isolation. All details can be found in~\Cref{sec:training_details}.

\subsection{Results on synthetic tasks}
\label{sec:main_results}

\begin{figure}[t]
    \vspace*{-1em}
    \begin{tabular}{@{}c@{\hspace{0.01\textwidth}}c@{\hspace{0.01\textwidth}}c@{\hspace{0.01\textwidth}}c@{}}
    25GMM
    &
    Distorted 25GMM
    &
    \multicolumn{2}{c}{40GMM}
    \\
    \cmidrule(lr){1-1}
    \cmidrule(lr){2-2}
    \cmidrule(lr){3-4}
    \includegraphics[width=0.243\textwidth,trim=0 0 40 46,clip]{
        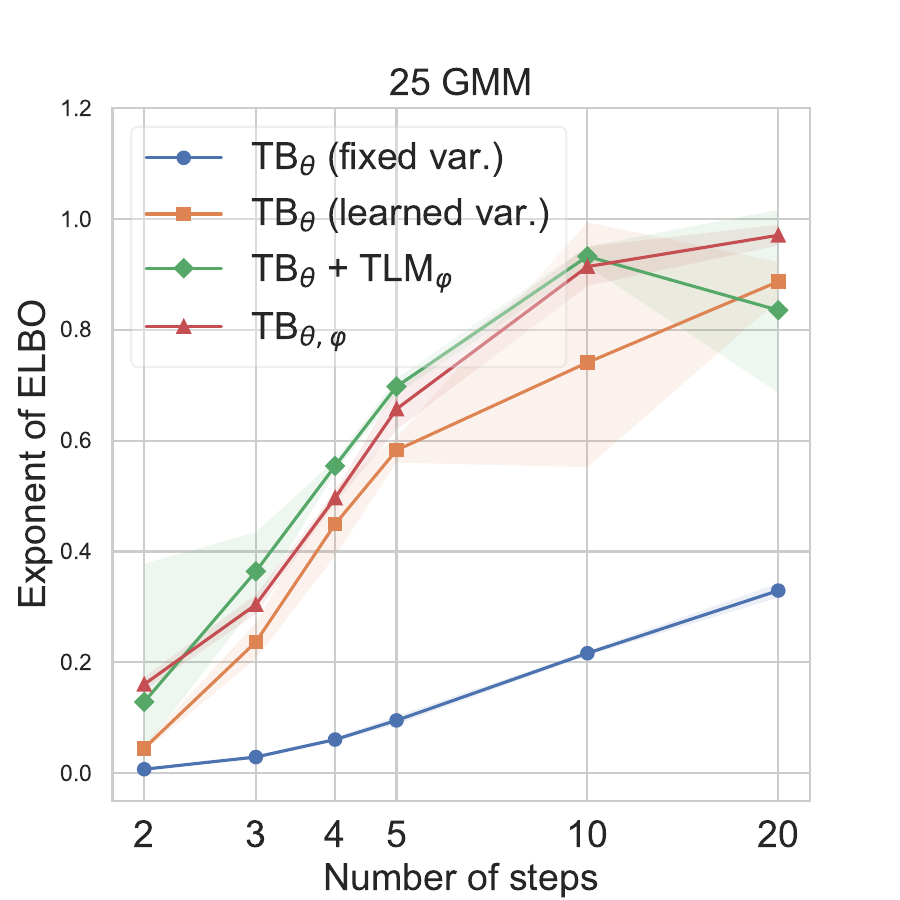}
    &
    \includegraphics[width=0.243\textwidth,trim=0 0 40 46,clip]{
        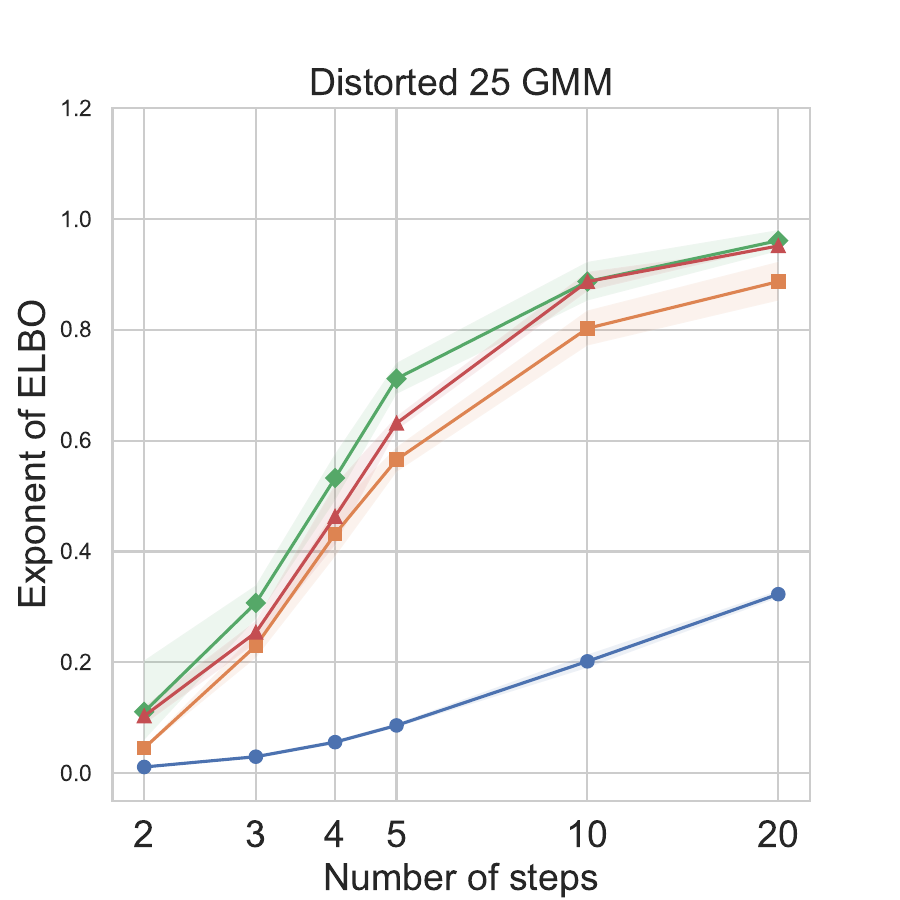}
    &
    \includegraphics[width=0.243\textwidth,trim=0 0 40 46,clip]{
        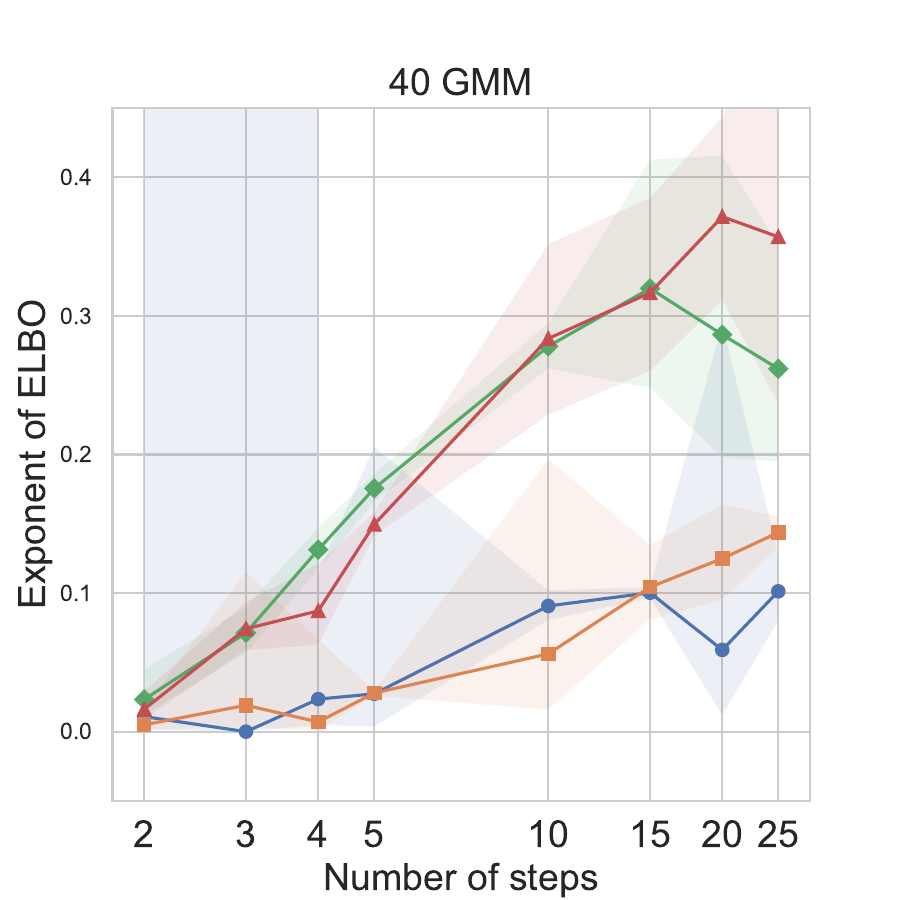}
    &
    \includegraphics[width=0.243\textwidth,trim=0 0 40 46,clip]{
        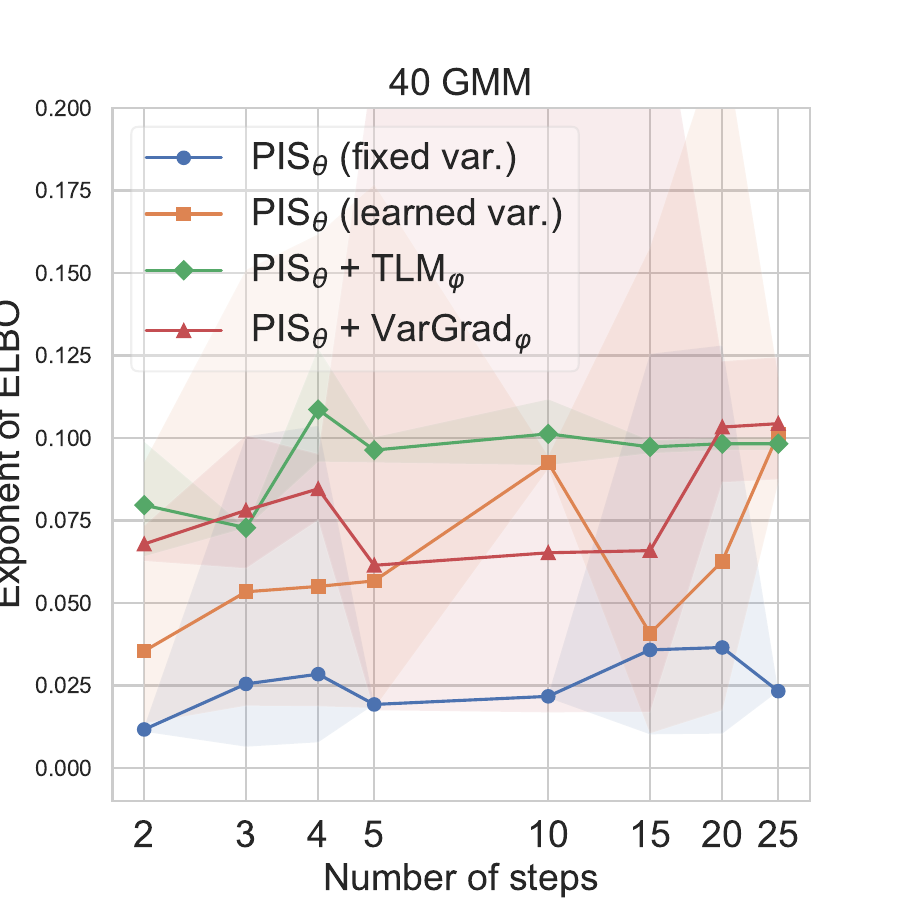}
    \\
    \cmidrule(lr){1-3}
    \cmidrule(lr){4-4}
    \multicolumn{3}{c}{TB gen.\ loss}
    &
    Rev.\ KL gen.\ loss
    \end{tabular}
    \caption{
        We compare performance of $\rm TB_{\theta}$ (fixed var.), $\rm TB_{\theta}$ (learned var.), 
        $\rm TB_{\theta} + TLM_{\varphi}$, $\rm TB_{\theta, \varphi}$ on three GMM targets. The rightmost plot shows PIS-like generation process objectives on \textbf{40GMM}. Results are compared using ELBO. Mean and std over 3 seeds, collapsed runs excluded.
    }

    \label{fig:main_plots}
    \vspace{0.1cm}
    {
    \centering
    \begin{tabular}{@{}c@{\hspace{0.01\textwidth}}c@{\hspace{0.01\textwidth}}c@{\hspace{0.01\textwidth}}c@{}}
    125GMM 
    &
    Easy Funnel
    &
    Hard Funnel
    &
    Distorted Manywell \\
    \includegraphics[width=0.243\textwidth,trim=0 0 40 49,clip]{
        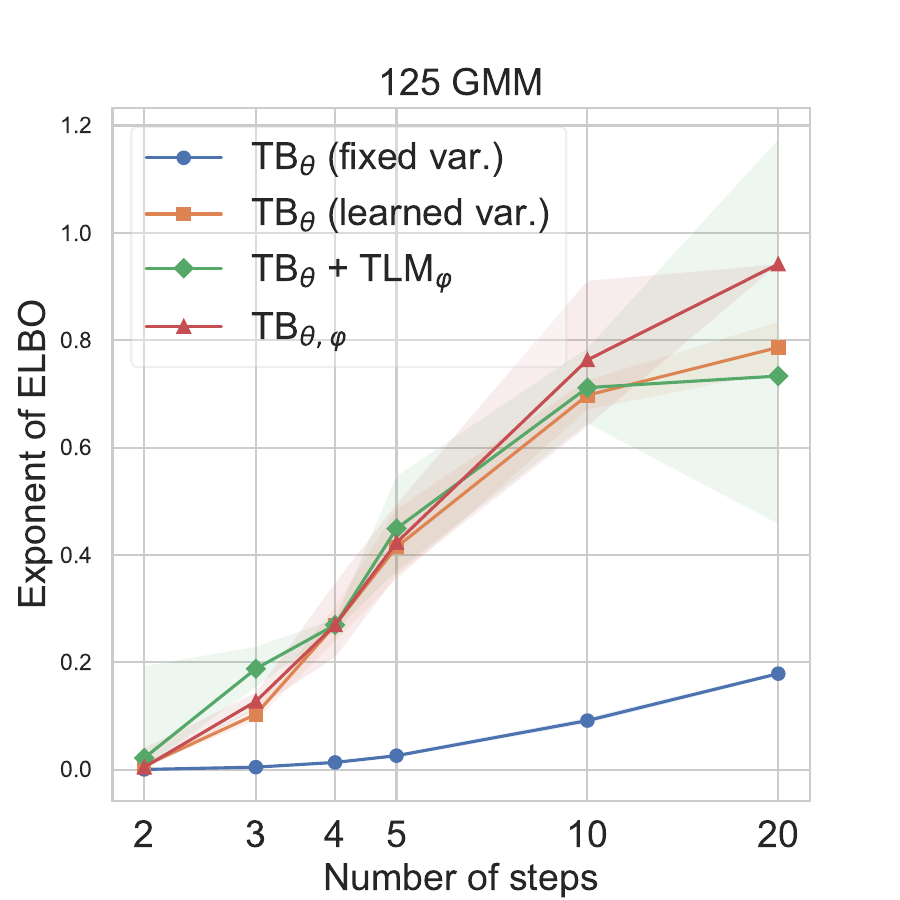
    } 
    &\includegraphics[width=0.243\textwidth,trim=0 0 40 49,clip]{
        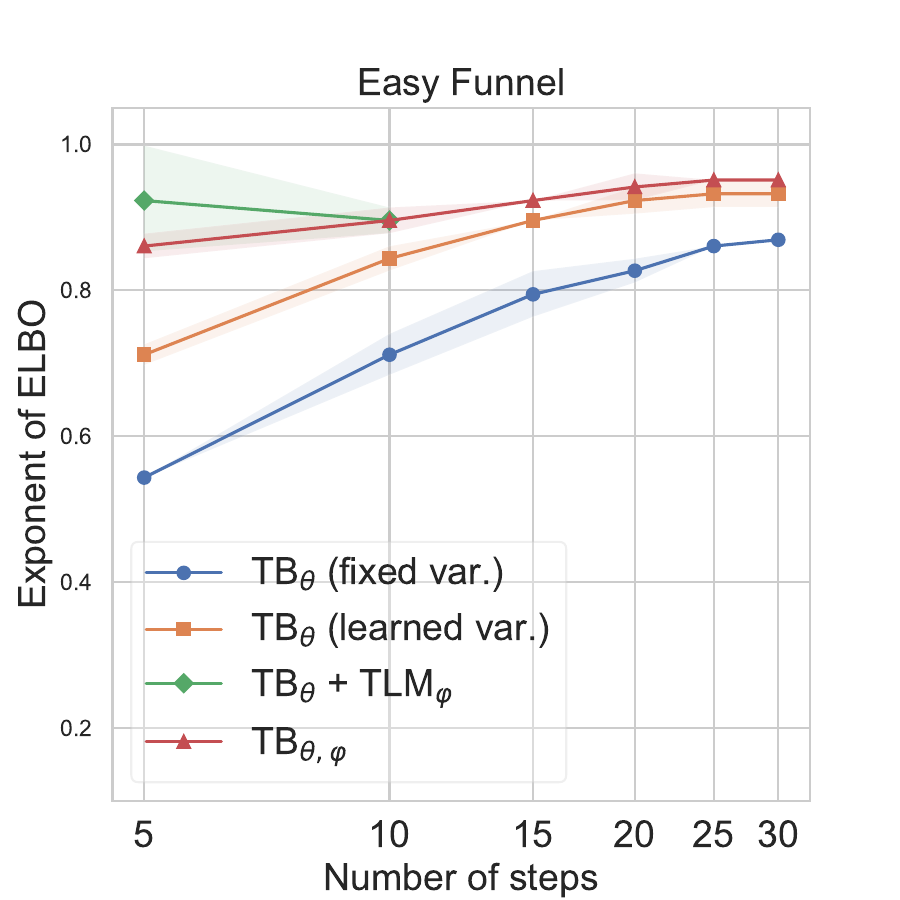
    } 
    &
    \includegraphics[width=0.243\textwidth,trim=0 0 40 49,clip]{
        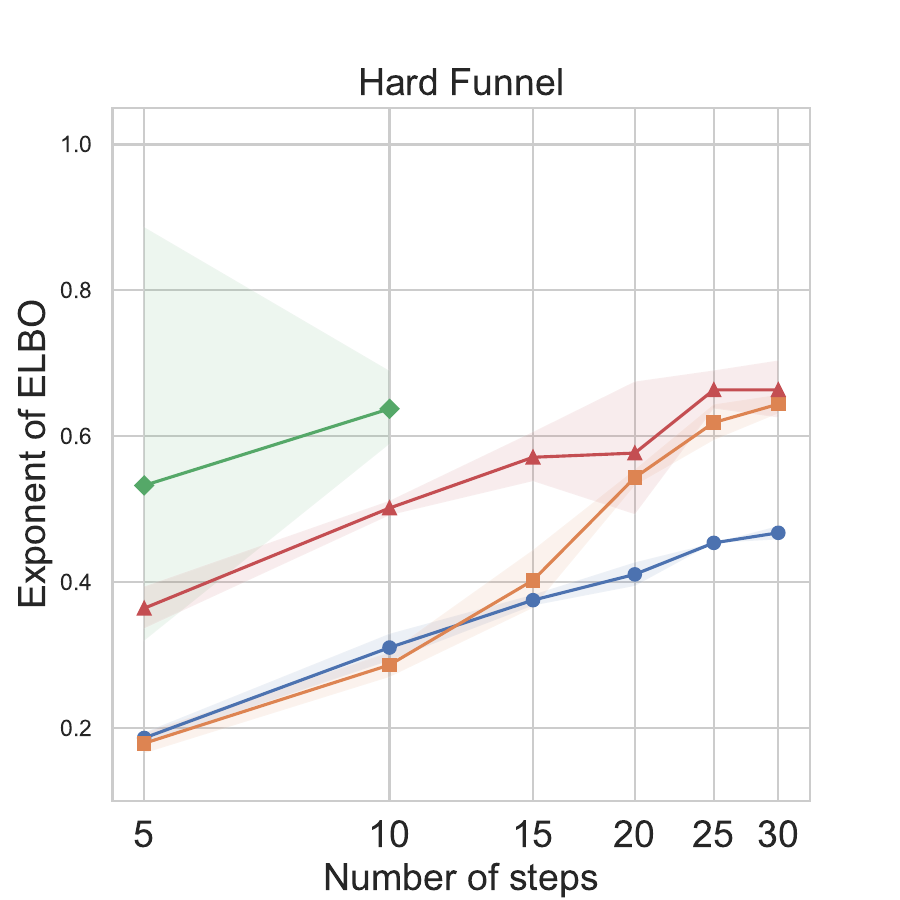
    } 
    &
    \includegraphics[width=0.243\textwidth,trim=0 0 40 49,clip]{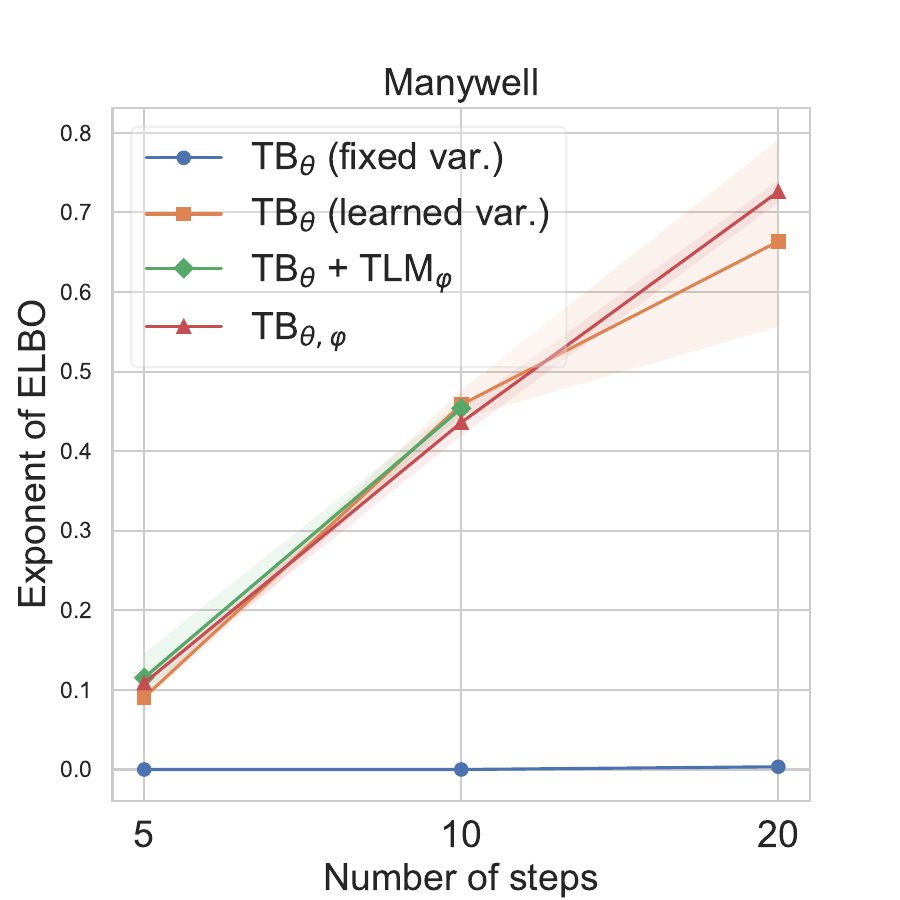} 
    \\
    \end{tabular}
    \caption{
       The same plots as \Cref{fig:main_plots} for \textbf{125GMM}, \textbf{Easy Funnel}, \textbf{Hard Funnel}, \textbf{Distorted Manywell}.
    }
    \label{fig:main_plots_2}
    }
    \vspace*{-1em}
\end{figure}

Representative results are shown in \Cref{fig:main_plots,fig:main_plots_2} and full tables are presented in \Cref{sec:results_tables}.

\looseness=-1
\paragraph{Learned generation variances improve sampling.}

Learnable variance of the generation process significantly improves the performance of the sampler, especially when the number of generation steps is small. As shown in~\Cref{fig:main_plots}, models with learnable generation variance dramatically outperform ones with fixed variance: on some energies, learned-variance samplers with as few as 5 generation steps outperform 20-step samplers with fixed variance. Second, the importance of trainable variance is clearly observable in complex environments or environments with high distortions (\Cref{fig:main_plots_2}).  If the variance is fixed, the sampler is likely to struggle with correctly sampling narrow modes: in particular, the addition of noise of variance $\sigma^2\dt$ on the last step imposes a smoothness constraint on the modelled distribution. Conversely, learning the magnitude of added noise allows to capture the shape of such modes.

\paragraph{Off-policy losses are superior to differentiable simulation.} 

The off-policy TB loss with exploratory behaviour policy is on par with or better than the PIS loss for learning the generation process in all cases, as shown in the right two panels of \Cref{fig:main_plots} and \Cref{sec:results_tables}. This is consistent with findings in past work, such as \cite{richter2024improved,sendera2024improved,kim2025adaptive}. Extending these findings, find that this improvement is maintained when generation variances and the destruction process are learned. Moreover, the TB loss is more memory-efficient than PIS, as it does not require storing the entire computation graph of the  sampling trajectory for backpropagation.

\paragraph{Learning the destruction process is beneficial.}

Learning the destruction process yields an improvement over models with fixed destruction process and learned generation variance on all tasks, although the improvement is often less pronounced when the number of sampling steps is large (\eg, on the Funnel densities in \Cref{fig:main_plots_2}), presumably because the reverse of a fixed destruction process is better modelled by Gaussian transitions when the number of steps is large. See \Cref{fig:funnel_points}, showing that learned variance helps to model the narrow part of the funnel.

\paragraph{TB is preferable to TLM for learning destruction.}

Extending the results of \cite{gritsaev2025optimizing} in discrete cases, we find that training the destruction process with the TLM loss is typically superior to TB when the number of steps is small. However, the TLM loss is unstable and often leads to divergent training when the number of steps is large (see \Cref{sec:results_tables}). We note that, even at optimality, the TLM loss gradient has nonzero variance, while the TB loss gradient is zero for all trajectories at the global optimum, which may explain TB's greater stability.

\begin{figure}[t]
\vspace*{-1em}
\begin{tabular}{@{}p{0.04\linewidth}@{\hspace{0.01\linewidth}}c@{\hspace{0.01\linewidth}}c@{\hspace{0.01\linewidth}}c@{\hspace{0.01\linewidth}}c@{}}
& TB$_\theta$ (fixed var.)
& TB$_\theta$ (learned var.)
& TB$_{\theta,\varphi}$
& Ground truth\\
\cmidrule(lr){2-2}\cmidrule(lr){3-3}\cmidrule(lr){4-4}\cmidrule(lr){5-5}
& $\text{ELBO}=-1.70$ & $\text{ELBO}=-1.73$ & $\text{ELBO}=-0.96$ & $\log Z=0$\\
\rotatebox{90}{\begin{minipage}{4.3125\linewidth}\centering $T=5$\end{minipage}}
&\includegraphics[width=0.23\linewidth,trim=36 20 36 30,clip]{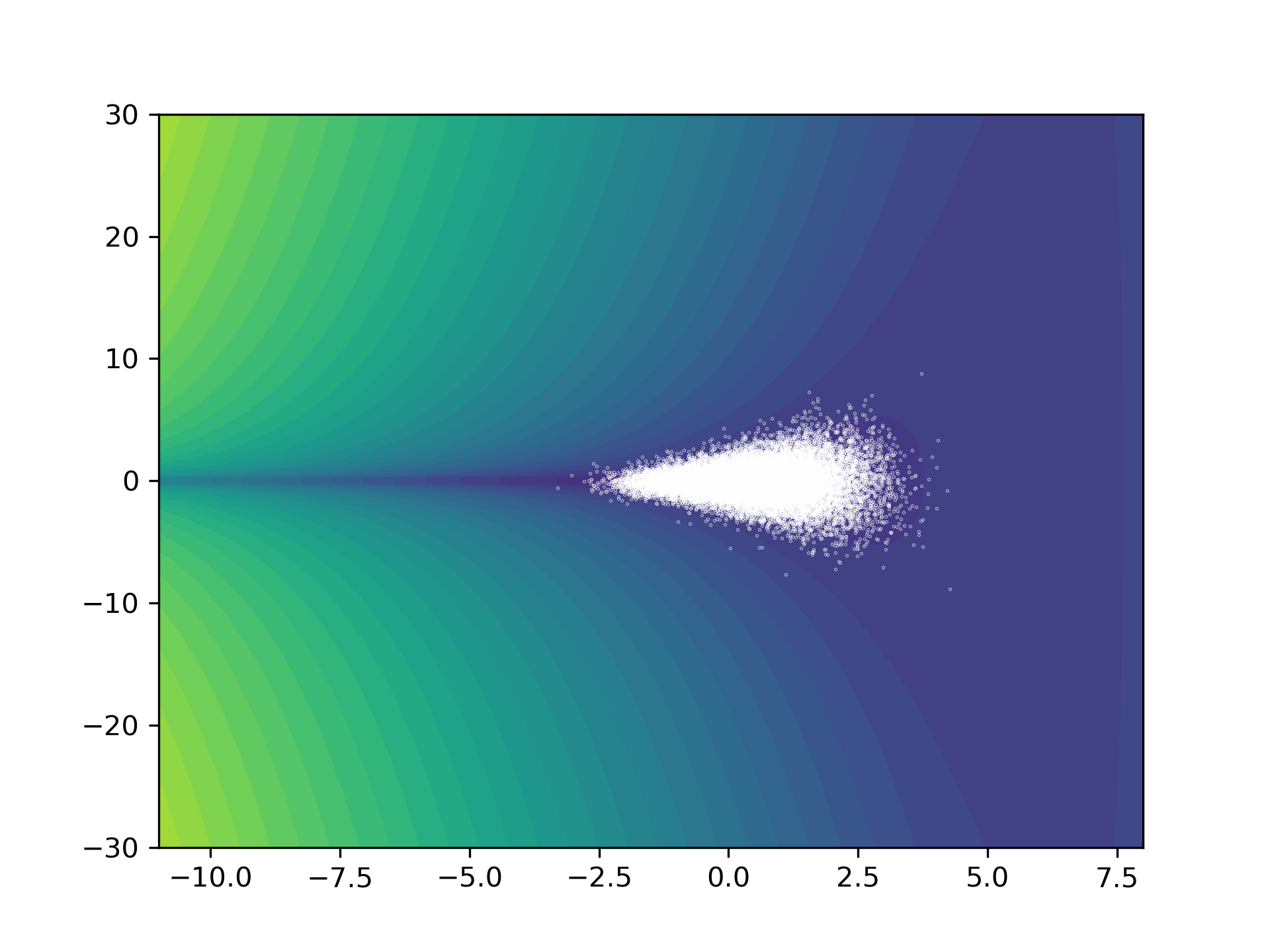}
&\includegraphics[width=0.23\linewidth,trim=36 20 36 30,clip]{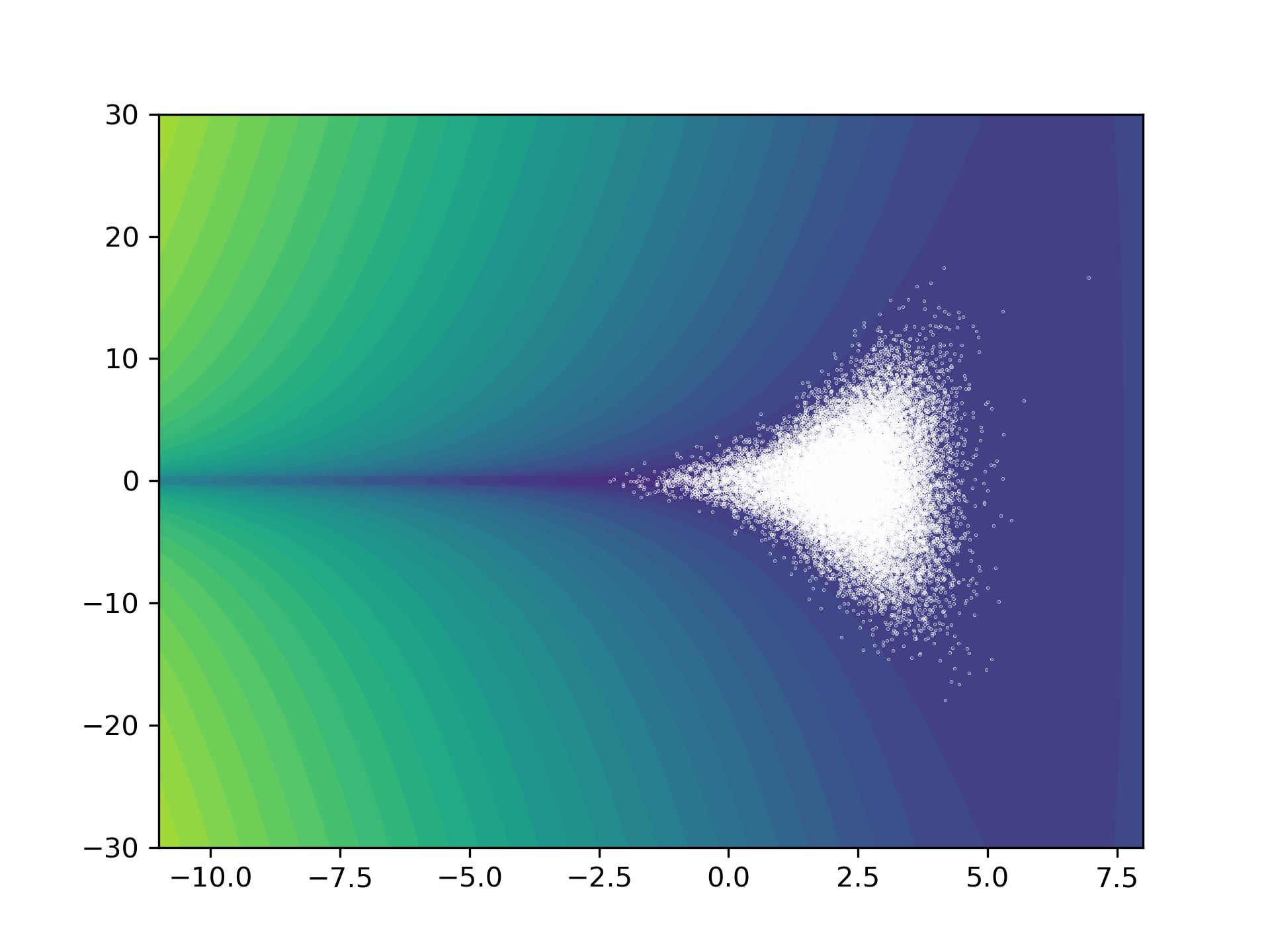}
&\includegraphics[width=0.23\linewidth,trim=36 20 36 30,clip]{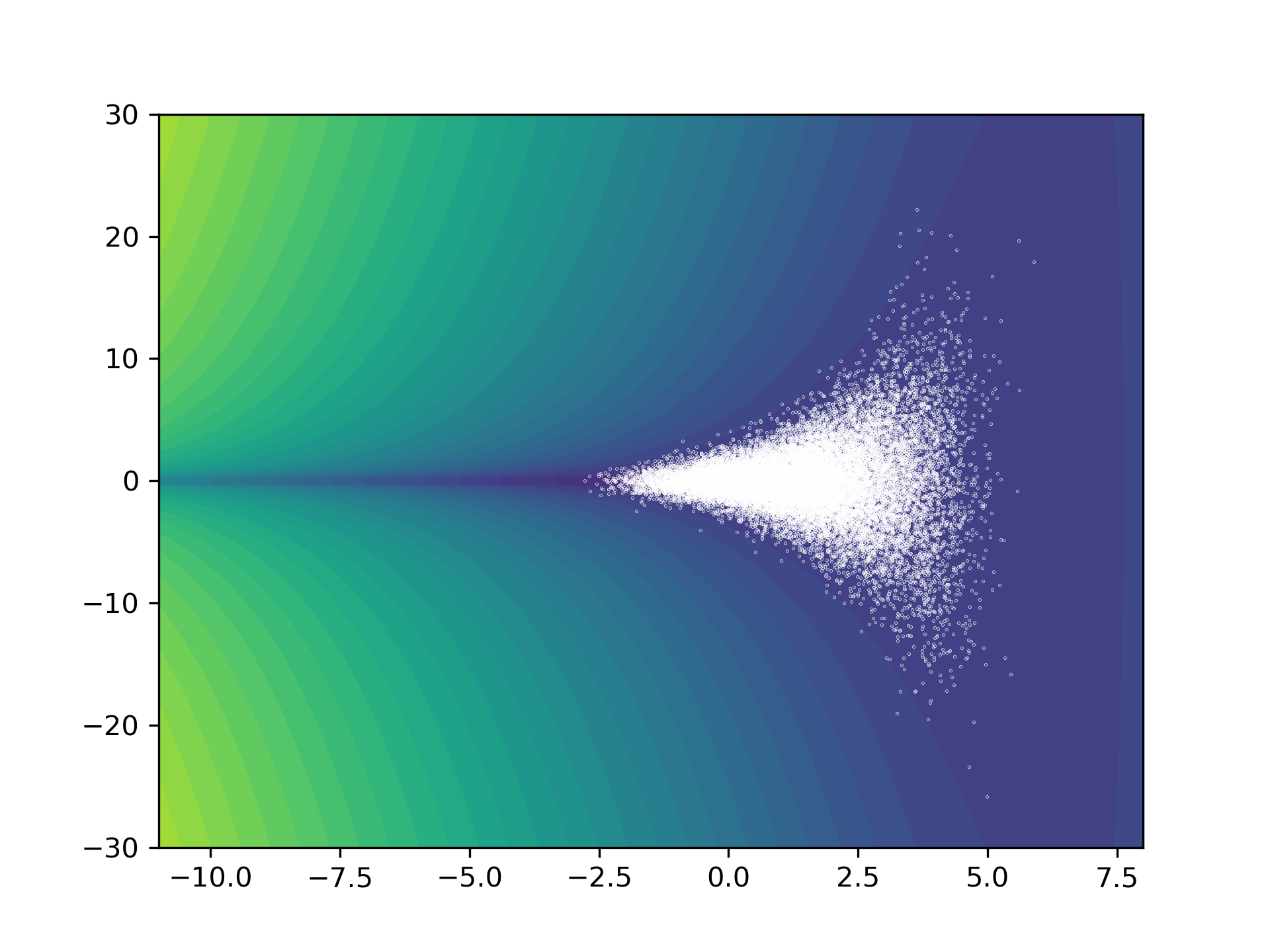}
&\includegraphics[width=0.23\linewidth,trim=36 20 36 30,clip]{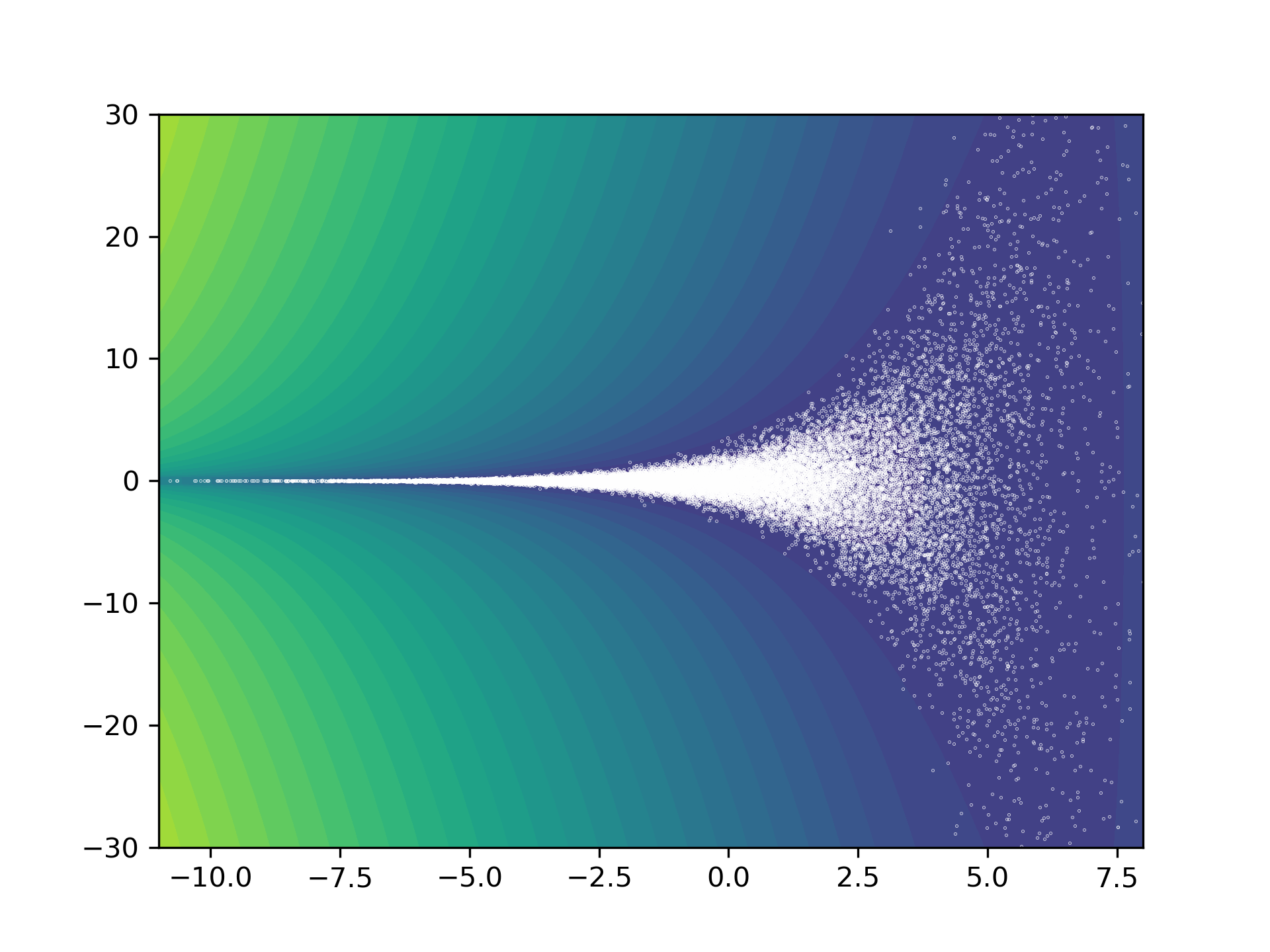}
\\ 
& $\text{ELBO}=-1.22$ & $\text{ELBO}=-1.25$ & $\text{ELBO}=-0.65$ & $\log Z=0$\\
\rotatebox{90}{\begin{minipage}{4.3125\linewidth}\centering$T=10$\end{minipage}}
&\includegraphics[width=0.23\linewidth,trim=36 20 36 30,clip]{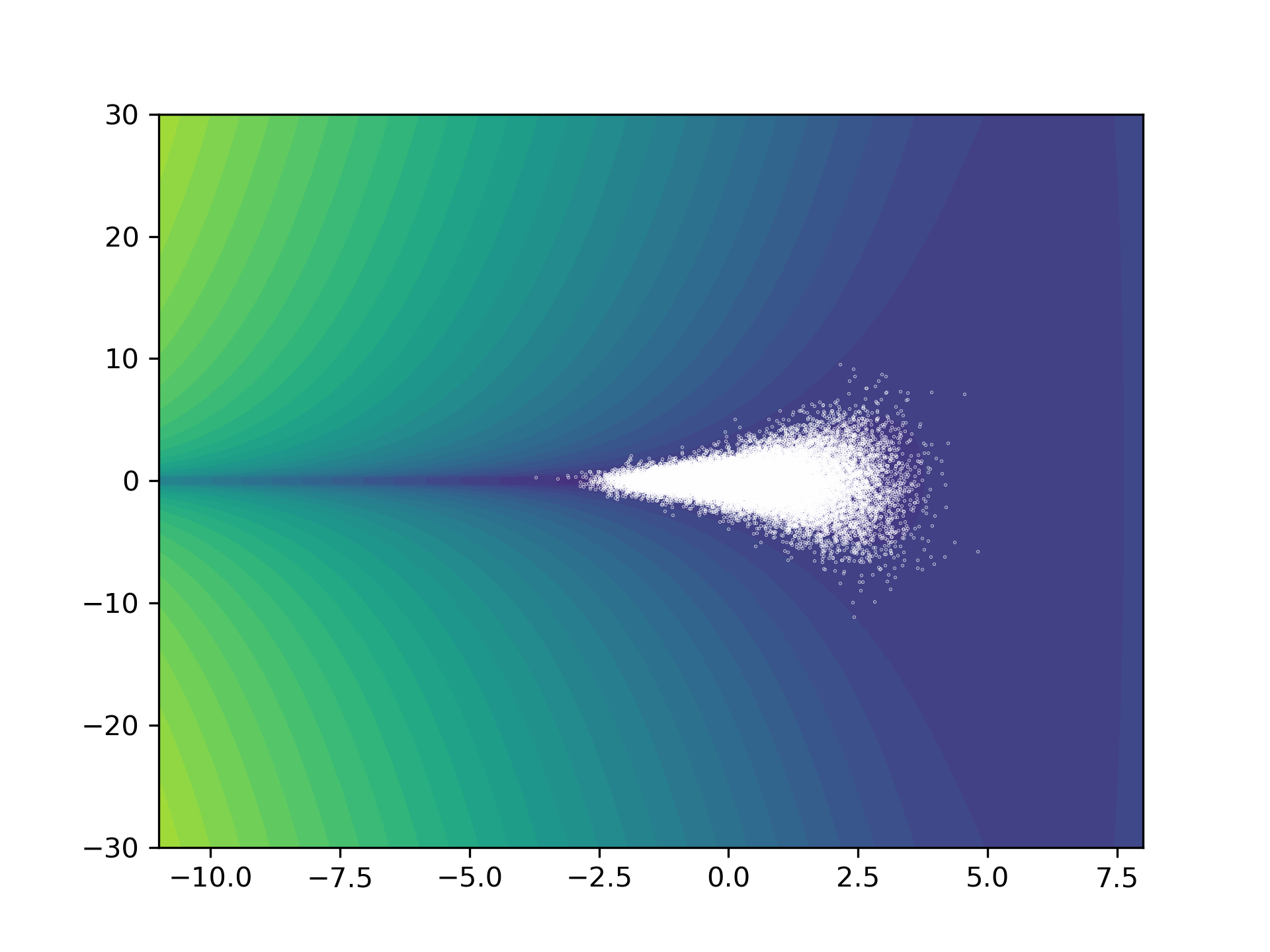}
&\includegraphics[width=0.23\linewidth,trim=36 20 36 30,clip]{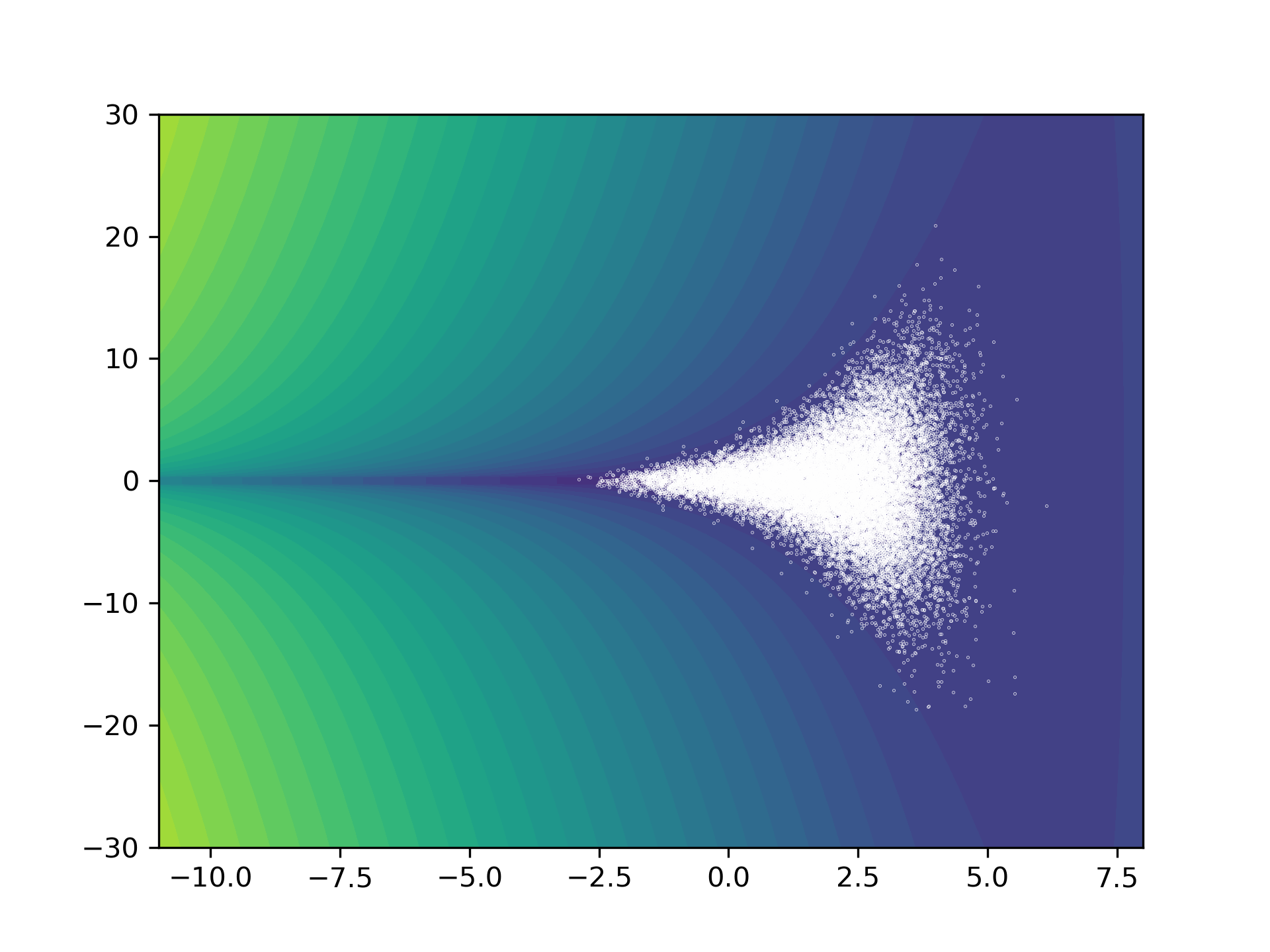}
&\includegraphics[width=0.23\linewidth,trim=36 20 36 30,clip]{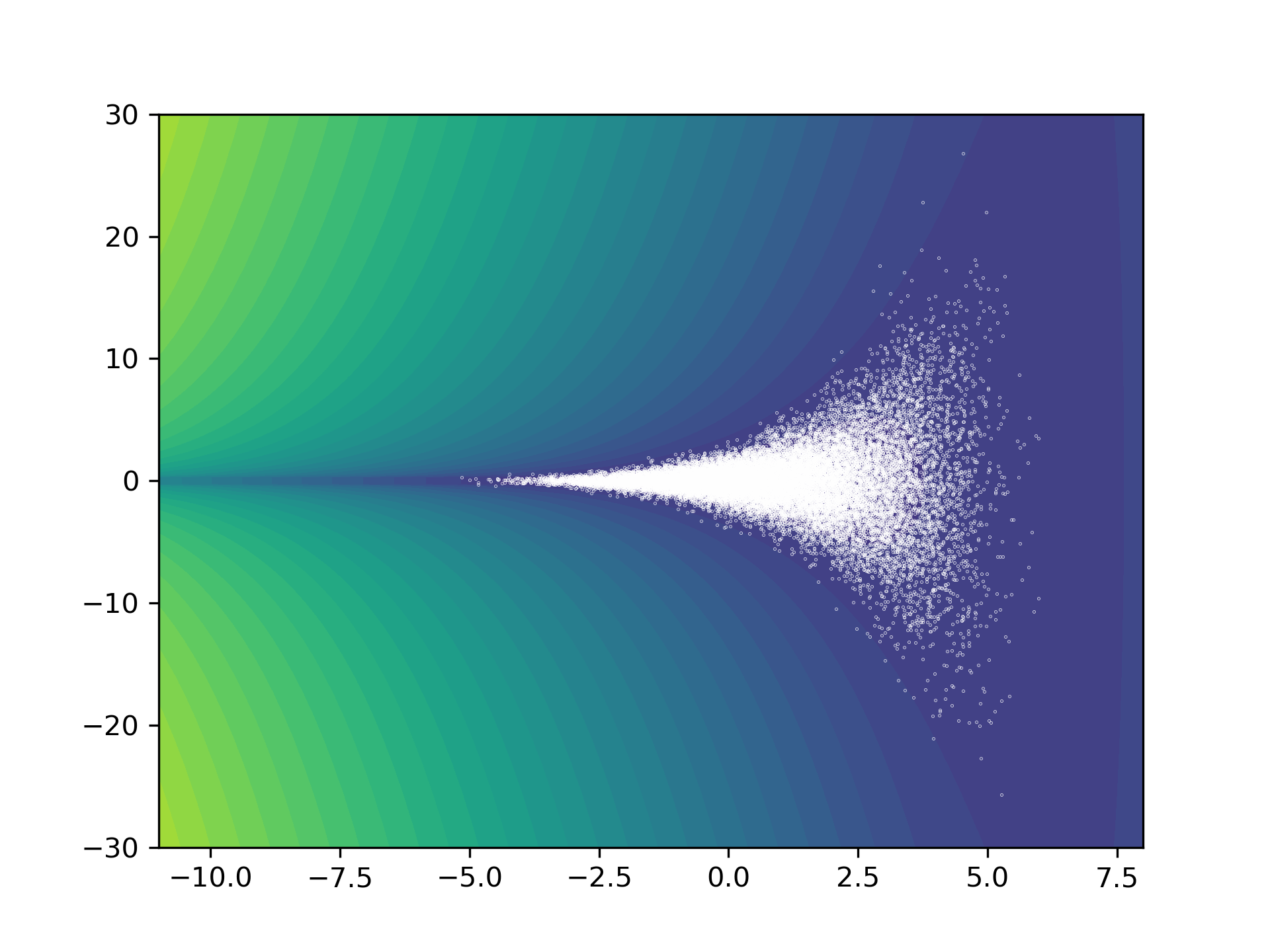}
&\includegraphics[width=0.23\linewidth,trim=36 20 36 30,clip]{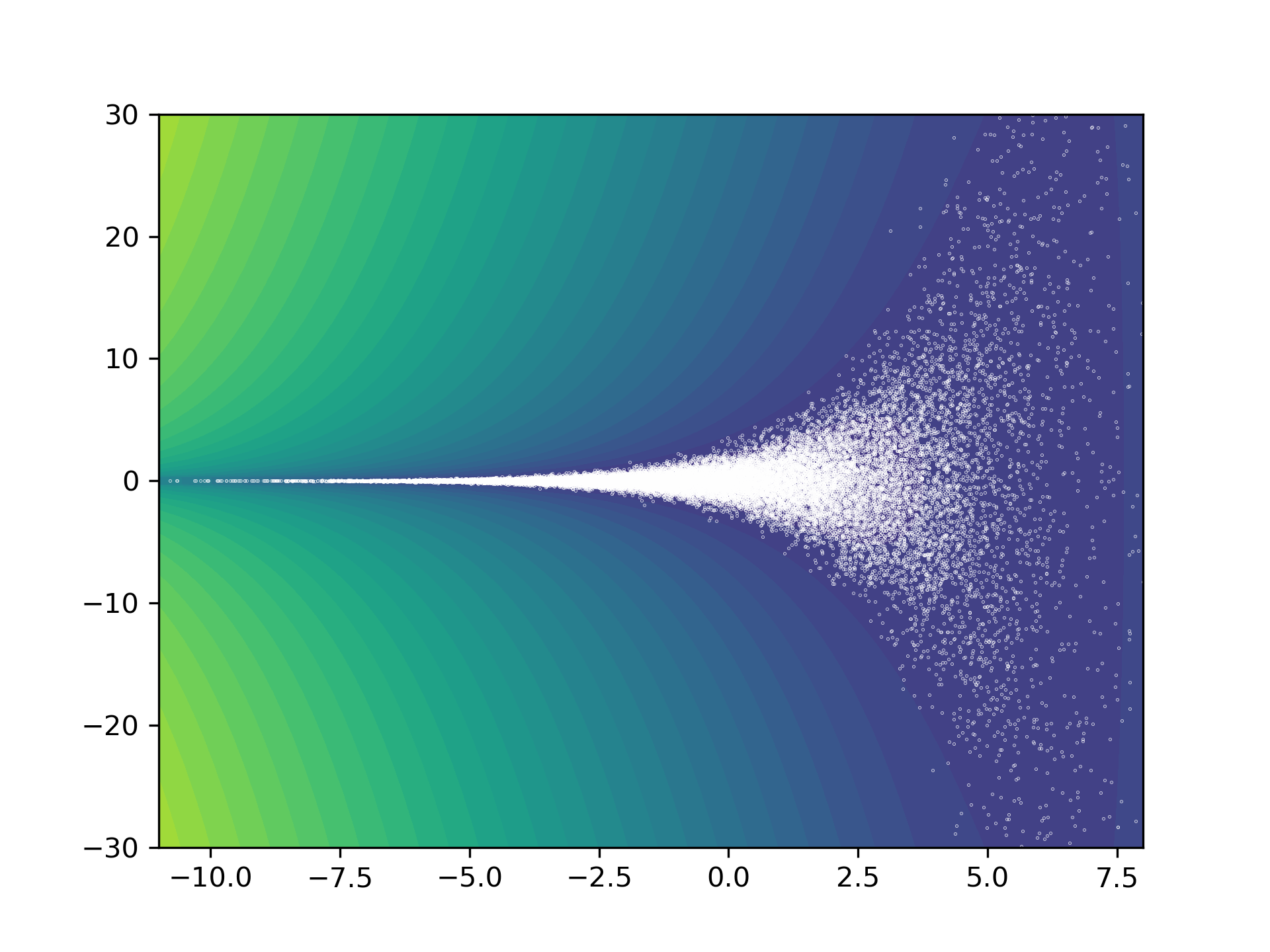}
\end{tabular}
\caption{First two dimensions of target energy and samples from diffusion samplers trained on the Hard Funnel energy. Samplers with fixed destruction process, and especially those with fixed generation process variance, struggle to fit the narrow tails accurately.}
\label{fig:funnel_points}
\end{figure}

\subsection{Ablation study}
\label{sec:ablation_results}

\begin{table}[t]
\vspace*{-1em}
    \centering
    \caption{Different configurations compared by ELBO, EUBO and 2-Wasserstein between generated and ground truth samples  on 40GMM. Mean and std over 5 runs are specified.
    }
    \resizebox{\linewidth}{!}
{
\begin{tabular}{@{}llccc}
\toprule
% & & \multicolumn{6}{c}{ELBO $\uparrow$} \\
Ablation & Setting & ELBO ($\uparrow$) & EUBO ($\downarrow$) & 2-Wasserstein ($\downarrow$) \\
\midrule
\multirow{2}{*}{\it (Optimal configuration)} &
\multirow{2}{*}{
\shortstack[l]{
    TB$_{\theta,\varphi}$, shared backbone, separate optimisers,\\ $\text{lr}_\varphi=\text{lr}_\theta$, $\text{replay ratio}=2$, target network
}
}
& \multirow{2}{*}{$-\textbf{1.89\std{0.10}}$} & \multirow{2}{*}{$\textbf{5.06\std{0.35}}$} & \multirow{2}{*}{$18.71\std{0.63}$} \\
& \\ \midrule
\multirow{2}{*}{Learning only generation} & TB$_\theta$, fixed generation var. & $-4.45\std{0.81}$ & $119.58\std{26.11}$ & $34.21\std{1.76}$ \\ 
& TB$_\theta$, learned generation var. & $-4.36\std{0.04}$ &  $7.02\std{0.62}$ &  $28.29\std{0.49}$ \\
% Ground Truth & 0 & 0 & $3.96\std{0.51}$ \\
\midrule 
\multirow{2}{*}{Models and optimisers} &  Separate backbones & $-6.03\std{1.15}$ & $26.71\std{2.70}$ &  $28.29\std{0.49}$ \\ 
&  Single optimizer & $-2.08\std{0.15}$ & $13.05\std{4.95}$ & $20.40\std{0.71}$ \\  
\midrule
\multirow{2}{*}{Learning rates} &  $\text{lr}_\varphi = 0.10 \times \text{lr}_{\theta}$ & $-2.06\std{0.20}$ &  $6.55\std{3.48}$ & $\textbf{16.79\std{1.71}}$ \\  
&  $\text{lr}_{\varphi} = 0.01 \times \text{lr}_\theta$ & $-2.69\std{0.94}$ &  $61.43\std{30.20}$ &  $21.77\std{3.68}$ \\
\midrule
\multirow{4}{*}{Replay ratio} &  $\text{Replay ratio}=0$ & $-1.96\std{0.13}$ & $18.59\std{11.72}$ & $20.86\std{1.10}$\\  
&  $\text{Replay ratio}=1$ & $-2.28\std{1.05}$ & $23.53\std{15.92}$ &  $\textbf{16.74\std{1.10}}$ \\  
&  $\text{Replay ratio}=4$ & $-\textbf{1.91\std{0.18}}$ & $6.08\std{1.46}$ & $\textbf{16.74\std{1.10}}$ \\ 
&  $\text{Replay ratio}=8$ & $-3.43\std{1.59}$ &  $21.24\std{21.20}$ & $\textbf{16.79\std{1.71}}$ \\  
\midrule
Target network &
 No target network & $-2.07\std{0.09}$ &  $10.36\std{2.02}$ & $20.13\std{1.10}$
\\\bottomrule
\end{tabular}
}

    \label{tab:ablation}
\end{table}

In this subsection we discuss the results of the ablation study to test the design choices outlined in~\Cref{sec:techniques}. The numerical results are presented in~\Cref{tab:ablation}.

\paragraph{Parametrisation, optimisers, learning rates.}

As stated in~\Cref{sec:techniques}, we experiment with using a shared backbone for the generation and destruction processes. We find that using a shared backbone drastically increases the quality of the sampler. Moreover, even though the backbone is shared between two networks, it is optimal to use separate optimisers for the two processes.

We empirically find that the performance of samplers is sensitive to learning rates, and thus they must be carefully tuned for each environment. We use equal learning rates for $\theta$ and $\varphi$ for GMM distributions. However, for more complex environments the destruction policy learning rate must be smaller than that of the generation policy.
For instance, in Hard Funnel, a $10^{3}$ times smaller learning rate for $\varphi$ is optimal, and for Manywell, the optimal ratio is $10^4$ or $10^5$.

\paragraph{Target network and replay buffer.}
The best replay ratio in our setup is $2$, and we use this value for all our experiments. 
Moreover, using target networks increases the stability of the training and the final quality of the sampler. 

\subsection{Scalability: Sampling in GAN latent space for conditional image generation}

\looseness=-1
To validate our main findings in a high-dimensional setting, we consider the setup proposed in~\cite{venkatraman2025outsourced}. Let $g_\psi: \mathbb{R}^{d_{\text {latent}}} \to \mathbb{R}^{d_{\text {data }}}$ be a pretrained GAN generator~\cite{goodfellow2014generative}, and $r(x, y)$ be some positive-valued function operating on data points $x \in \mathbb{R}^{d_{\text {data }}}$ and conditions $y$. The task is to sample latent vectors $z$ from the distribution defined by the energy $\mathcal{E}(z) = -\log p_{\text{prior}}(z) - \beta\log r(g_\psi(z), y)$, where $p_{\text{prior}}$ is $\mathcal{N}(0, I)$. The decoded samples $g_\psi(z)$ then follow the posterior distribution, proportional to the product of the GAN image prior and the tempered constraint $r(x,y)^\beta$. 

\begin{wrapfigure}[19]{r}{0.5\linewidth}
\vspace*{-1.25em}
\captionof{table}{\label{tab:ffhq}FFHQ text-conditional sampling results. All models use $T=5$ discretisation steps.}\vspace*{-0.75em}
\centering
\resizebox{\linewidth}{!}{
\begin{tabular}{@{}lccc}
\toprule   & ELBO $(\uparrow)$ & $\mathbb{E}[\log r(\mathbf{x}, \mathbf{y})]$ $(\uparrow)$ & CLIP Diversity $(\uparrow)$ \\
\midrule
 Prior & $-117.0$ & $-1.17$ & 0.36\\
 \midrule
$\rm TB_{\theta}$ (fixed var.) & $98.8$ & 1.42 & \textbf{0.24} \\
$\rm TB_{\theta, \varphi}$ & \textbf{104.5} & \textbf{1.49} & \textbf{0.24} \\
\bottomrule
\end{tabular}
}\\[0.5em]
TB$_\theta$ (fixed var.) \cite{venkatraman2025outsourced} \\
\includegraphics[width=0.15\linewidth]{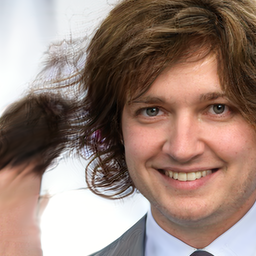}
\includegraphics[width=0.15\linewidth]{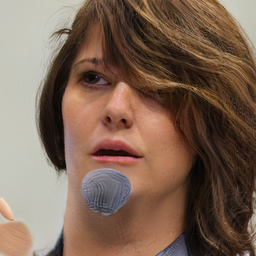}
\includegraphics[width=0.15\linewidth]{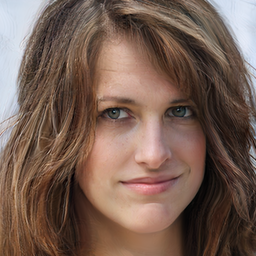}
\includegraphics[width=0.15\linewidth]{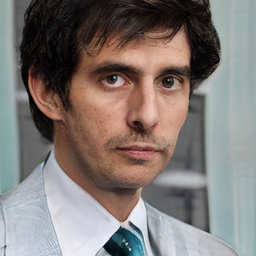}
\includegraphics[width=0.15\linewidth]{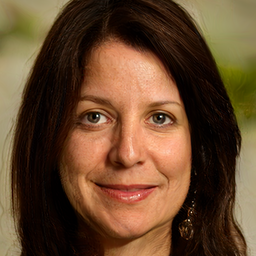}
\includegraphics[width=0.15\linewidth]{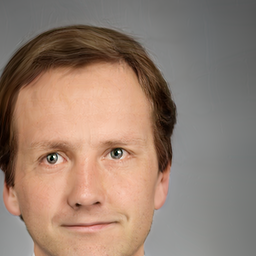}
\\
TB$_{\theta,\varphi}$ (ours) \\
\includegraphics[width=0.15\linewidth]{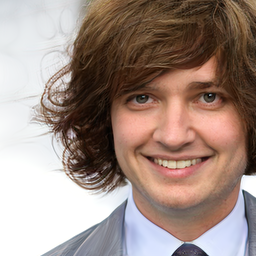}
\includegraphics[width=0.15\linewidth]{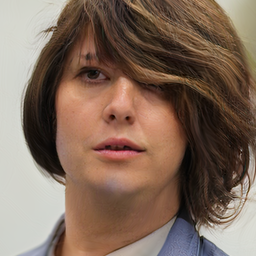}
\includegraphics[width=0.15\linewidth]{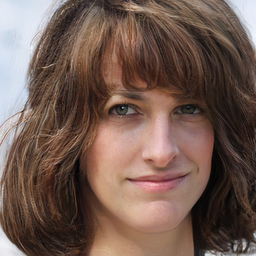}
\includegraphics[width=0.15\linewidth]{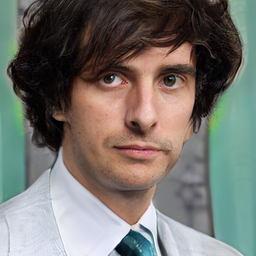}
\includegraphics[width=0.15\linewidth]{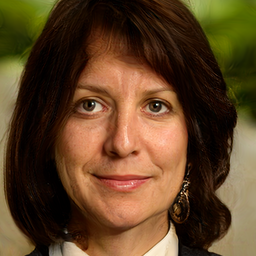}
\includegraphics[width=0.15\linewidth]{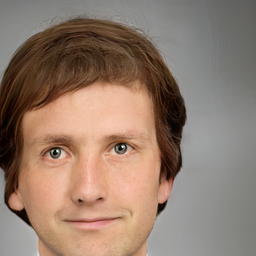}
\captionof{figure}{\label{fig:gan_main}Decoded latents sampled with the same random seeds from outsourced diffusion samplers trained with a StyleGAN3 prior and ImageReward with prompt `A person with medium length hair'.}
\end{wrapfigure}

In our experiment we use StyleGAN3~\cite{karras2021alias} trained on the $256\times256$ FFHQ dataset~\cite{karras2019style} with $d_{\text {latent }} = 512$. We take $y$ to be a text prompt and $\log r(x, y)$ to be the ImageReward score~\cite{xu2023imagereward}; thus our aim is to sample latents that produce faces aligned with the specified text prompt. We use the same value of $\beta=100$ as in~\cite{venkatraman2025outsourced}. Note that the GAN itself is unconditional, and text-conditioning is only achieved via sampling from the distribution defined through ImageReward. An optimal model should trade off between high reward (satisfying the prompt) and high diversity (modelling all modes).

\looseness=-1
We train diffusion samplers with only $T=5$ sampling steps to sample from the specified target distribution in $\mathbb{R}^{d_{\text {latent }}}$, comparing the approach from \cite{venkatraman2025outsourced} ($\rm TB_{\theta}$ with fixed generation variances) to $\rm TB_{\theta, \varphi}$ with learnable destruction process and generation variances. 
In addition to ELBO, we report average ImageReward score and diversity, measured as average cosine distance of CLIP~\cite{radford2021learning} embeddings for 128 generated images. (Note that EUBO and 2-Wasserstein cannot be computed here since we have no access to ground-truth samples.) We train separate models across 7 different prompts and find improvements in ELBO and ImageReward on 5 of them, similar performance on 1, and slight degradation in 1. A representative example of the improvement is shown in \Cref{fig:gan_main}. Average metrics are reported in \Cref{tab:ffhq}. For further details see \Cref{apx:gan_details}.

\section{Conclusion}

In this paper we present the benefits of using learnable variance and learnable destruction process in diffusion samplers. We empirically find that these modifications help to more accurately model complex energy landscapes, especially with few sampling steps. We also contribute to the understanding of training diffusion samplers by studying techniques that improve their stability and convergence speed. We hope that these results inspire the community to scale our findings to other distributions and domains. Another interesting direction for future work would be to rigourously study the optimal parametrisations of the generation and destruction processes -- including non-Gaussian transitions -- and the theoretical limits of sampling with discrete-time learned diffusions.

\section*{Acknowledgements}
This research was supported in part through computational resources of HPC facilities at HSE University~\citep{kostenetskiy2021hpc} and HPC resources from GENCI–IDRIS (Grant 2025-AD011016276). The work of Daniil Tiapkin was supported by the Paris \^Ile-de-France R\'egion in the framework of DIM AI4IDF.

\bibliography{clean}
\bibliographystyle{apalike}

\newpage 

\appendix

\newpage

\section{More on diffusion samplers}
\label{apx:more_theory}

\subsection{On KL divergence between processes with different variances.}
\label{apx:incompar_proc}
{

If two path space measures $\mathbb{P}_1$ and $\mathbb{P}_2$ defined by SDEs have different diffusion coefficients $g_1(t)$ and $g_2(t)$, they are not absolutely continuous with respect to each other, therefore, ${\rm KL}(\mathbb{P}_1 \,\|\, \mathbb{P}_2 ) = \infty$. To see this, notice that the quadratic variation of $\mathbb{P}_i$ on the time interval $[t,t']$ is almost surely $\int_t^{t'}g_i(s)^2\dd s$. Therefore, if $\mathbb{P}_1\ll\mathbb{P}_2$, then the $\int_t^{t'}g_1(s)^2\dd s=\int_t^{t'}g_2(s)^2\dd s$ for every $t<t'$, which implies $g_1=g_2$ if both are continuous.

}

\subsection{On soft RL equivalence.}
\label{apx:soft_rl}
{
In this section, we show that training a diffusion sampler with a finite number of steps and a fixed destruction process is equivalent to solving an entropy-regularized reinforcement learning (RL) problem.

We formalize the RL problem using a finite-horizon Markov Decision Process (MDP) \cite[Chapter 4]{puterman2014markov}, defined as the tuple $\mathcal{M} = (\mathcal{S}, \mathcal{A}, \mathsf{P}, \mathsf{r}, H, s_0)$, where $\mathcal{S}$ and $\mathcal{A}$ are measurable state and action spaces, $\mathsf{P}_h(\dd s' | s, a)$ is a time-inhomogeneous transition kernel, $\mathsf{r}_h(s,a)$ is a time-dependent reward function with terminal reward $\mathsf{r}_H(s)$, $H$ is the planning horizon, and $s_0$ is the initial state. We focus on deterministic MDPs, where the transition kernel is
\[
    \mathsf{P}_h(\dd s' \mid s,a) = \delta_{\mathsf{T}_h(s,a)}(\dd s')\,,
\]
with $\mathsf{T}_h(s,a)$ a deterministic transition map and $\delta_y(\dd x)$ the Dirac measure at $y$. The action space is equipped with a base measure $\dd\mu(a)$ (e.g., the Lebesgue measure).

A time-inhomogeneous policy $\pi = \{\pi_h\}_{h=0}^{H-1}$ is a collection of conditional densities $\pi_h(a|s)$ with respect to $\dd\mu(a)$. The corresponding entropy-regularized (or soft) value function \cite{neu2017unified,geist2019theory} is defined as
\[
    V^{\pi}_{\lambda,0}(s) = \mathbb{E}_{\pi}\left[ \sum_{h=0}^{H-1} \left(\mathsf{r}_h(S_h,A_h) - \lambda\log \pi_h(A_h|S_h)\right) + \mathsf{r}_H(S_H) \,\middle|\, S_0 = s \right]\,,
\]
where $\lambda \geq 0$ is a regularization coefficient, and the expectation is over the trajectory induced by $A_h \sim \pi_h(\cdot | S_h)$ and $S_{h+1} = \mathsf{T}_h(S_h, A_h)$ for $h = 0, \ldots, H-1$.

We call a destruction process $\overleftarrow{p} = (\overleftarrow{p_t})_{t=0,\Delta t,\ldots,1}$ \textit{regular} if $\overleftarrow{p_{\Delta t}}(\dd x_0|x_{\Delta t}) = \delta_0(\dd x_0)$ and $\overleftarrow{p_{t}}(\cdot | x_{t})$ has a full support of $\R^d$ for all $t > \Delta t$. We use $\overleftarrow{p_{t+\Delta t}}(x_t | x_{t+\Delta t})$ as a corresponding density.

\begin{theorem}
    Define $\mathcal{E}(x) \colon \R^d \to \R$ as an energy function, a target density $p_{\rm target}(x) = \exp(-\mathcal{E}(x))/Z $, and a regular destruction process $\overleftarrow{p} = (\overleftarrow{p_t})_{t=0,\Delta t,\ldots,1}$.

    Define a Markov decision process $\mathcal{M}_{\mathrm{DS}}$ with a state space equal to $\mathcal{S} = \R^d$, an action space equal to $\mathcal{A} = \R^d$, deterministic transition kernel defined by the following transition function $\mathsf{T}_h(s,a) = a$, a planning horizon $H=T$, and reward function $\mathsf{r}_h(s,a) = \log \overleftarrow{p}_{h \cdot \Delta t}(a \mid s)$ for $h < H$, with terminal reward $\mathsf{r}_H(x) = -\mathcal{E}(x)$.

    Then, for initial state $s_0 = 0$, the soft value with $\lambda = 1$ in the MDP $\mathcal{M}_{\mathrm{DS}}$ satisfies
    \[
        V^{\pi}_{\lambda=1,0}(s_0) = \log \mathrm{Z} - \mathbb{D}_{\rm KL}(p_0\overrightarrow{p_\pi}\|p_{{\rm target}} \overleftarrow{p})\,,
    \]
    where $\overrightarrow{p_{\pi}}$ is a generation process that corresponds to a policy $\pi$:
    $
        \overrightarrow{p_{\pi,t}}(x_{t+\Delta t} \mid  x_t) = \pi_{t / \Delta t}( x_{t+\Delta t} \mid x_{t} )\,.
    $
    As a result, the policy corresponding to the optimal policy $\pi^\star_h$ in the entropy-regularized MDP with $\lambda = 1$ provides a generation process that samples from the target distribution $p_{\rm target}(x) \propto \exp(-\mathcal{E}(x))$.
\end{theorem}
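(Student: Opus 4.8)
The plan is to compute the soft value $V^{\pi}_{\lambda=1,0}(s_0)$ explicitly by unrolling the expectation over MDP rollouts and recognizing the result as $\log Z$ minus the reverse KL in \eqref{eq:rev_kl}. The first step is to identify MDP rollouts with generation trajectories. Since the transition map is $\mathsf{T}_h(s,a)=a$, a rollout $(S_0, A_0, S_1, A_1, \dots)$ under $\pi$ satisfies $S_{h+1} = A_h$; setting $S_h = X_{h\dt}$ makes each action $A_h$ equal to the next sampler state $X_{(h+1)\dt}$. With $s_0 = 0$ and $p_0 = \delta_0$, the stated correspondence $\pi_h(a\mid s) = \overrightarrow{p_\pi}(X_{(h+1)\dt}\mid X_{h\dt})$ shows that the law of the rollout is exactly $p_0\overrightarrow{p_\pi}$, so I can replace the MDP expectation by $\E_{X\sim p_0\overrightarrow{p_\pi}}$.

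The second step is to substitute the rewards and entropy terms. The running reward $\mathsf{r}_h(S_h,A_h)$ is the log-density of the destruction transition connecting the consecutive states $X_{h\dt}$ and $X_{(h+1)\dt}$, so the running rewards sum to $\log\overleftarrow{p}(X_{0,\dots,(T-1)\dt}\mid X_1)$; the terminal reward is $\mathsf{r}_H(S_H) = -\mathcal{E}(X_1) = \log\exp(-\mathcal{E}(X_1))$; and the entropy bonuses $-\log\pi_h(A_h\mid S_h)$ sum to $-\log\overrightarrow{p_\pi}(X_{\dt,\dots,1}\mid X_0)$. Collecting these into a single logarithm yields
\[
    V^{\pi}_{\lambda=1,0}(s_0) = \E_{X\sim p_0\overrightarrow{p_\pi}}\!\left[\log\frac{\exp(-\mathcal{E}(X_1))\,\overleftarrow{p}(X_{0,\dots,(T-1)\dt}\mid X_1)}{\overrightarrow{p_\pi}(X_{\dt,\dots,1}\mid X_0)}\right],
\]
and comparing with \eqref{eq:rev_kl}—whose integrand is the reciprocal of the one appearing here (up to the Dirac factor $p_0(X_0)$, assigned density $1$) and which carries an additive $\log Z$—identifies this as $\log Z - \mathbb{D}_{\rm KL}(p_0\overrightarrow{p_\pi}\|p_{\rm target}\overleftarrow{p})$, establishing the value identity.

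For the concluding claim, I would argue that maximizing $V^{\pi}_{\lambda=1,0}$ over policies equals minimizing the nonnegative KL, and that its minimum value $0$ is attained. Here the regularity of the destruction process is essential: disintegrating the backward trajectory law $p_{\rm target}\overleftarrow{p}$ in the forward direction produces a valid generation policy, and the condition $\overleftarrow{p_{\dt}}(\dd x_0\mid x_{\dt}) = \delta_0$ forces this policy's initial marginal to be $\delta_0 = p_0$, matching $s_0 = 0$, while full support for $t > \dt$ keeps every conditional density well-defined. At this policy the KL vanishes, so the soft-optimal policy $\pi^\star$ (which exists by soft Bellman optimality for the entropy-regularized MDP) attains $\mathbb{D}_{\rm KL} = 0$, i.e. $p_0\overrightarrow{p_{\pi^\star}} = p_{\rm target}\overleftarrow{p}$, whose time-$1$ marginal is $p_{\rm target}$.

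I expect the main obstacle to be the second step's bookkeeping—matching each running reward $\mathsf{r}_h$ to the correct backward transition density and handling the degenerate boundary step through the density-$1$ convention—together with justifying in the final claim that the KL minimum is genuinely $0$ rather than merely an infimum, which is exactly where regularity of the destruction process enters.
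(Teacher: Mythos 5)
Your proposal is correct and follows essentially the same argument as the paper: the identification $S_h = X_{h\dt}$, $A_h = X_{(h+1)\dt}$, term-by-term matching of running rewards with backward transition log-densities and entropy bonuses with forward log-densities, and comparison with \eqref{eq:rev_kl} under the density-$1$ convention for the Dirac initial step -- the only difference being that you unroll the value function into the KL, whereas the paper expands the KL into the value function, which is the same computation run in reverse. Your final paragraph, showing via disintegration of $p_{\rm target}\overleftarrow{p}$ and regularity that the KL minimum of $0$ is actually attained, is a useful addition: the paper asserts the concluding claim (``As a result, \ldots'') without spelling out this step.
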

\begin{proof}
Let us start from the expression \eqref{eq:rev_kl} for the reverse KL divergence between the corresponding generation and destruction processes:
\[
    \mathbb{D}_{\rm KL}(p_0\overrightarrow{p_\pi}\|p_{{\rm target}} \overleftarrow{p}) = \mathbb{E}_{X_{0, \Delta t,\ldots, 1} \sim \overrightarrow{p_\pi}(X_{0,\Delta t \ldots, 1} )} \left[ 
        \log\frac{p_0(X_0)\overrightarrow{p_\pi}(X_{\Delta t \ldots, 1} \mid X_0)}
        {\exp(-\mathcal{E}(X_1))\overleftarrow{p}(X_{0 , \ldots,(T - 1)\dt} \mid X_1)}
    \right]+\log Z.
\]
Next, we use a chain rule for the probability distribution to study the first term in the expansion above:
\begin{align*}
    \mathbb{E}_{X_{0, \Delta t,\ldots, 1} \sim \overrightarrow{p_{\pi}}(X_{0,\Delta t \ldots, 1} )} \left[
    \sum_{i=0}^{T-1} \log \overrightarrow{p_{\pi,i \Delta t}}(X_{(i+1)\Delta t} \mid  X_{i \Delta t}) - \log\overleftarrow{p_{(i+1)\Delta t}}(X_{i\Delta t} \mid X_{(i+1)\Delta t})
    + \mathcal{E}(X_1)
    \right].
\end{align*}
To show the equivalence with the corresponding soft RL definition of the value, we rename variables as follows: $T \mapsto H$, $i \mapsto h$, and $S_{h+1}=A_h= X_{(i+1)\Delta t}$, apply a definition of a reward function and of the generation process induced by policy:
\begin{align*}
    \mathbb{D}_{\rm KL}(p_0\overrightarrow{p_\pi}\|p_{{\rm target}} \overleftarrow{p}) - \log Z = \mathbb{E}_{\pi}\left[ \sum_{h=0}^{H-1} \left(\log \pi_{h}(A_{h}|S_h) - \mathsf{r}_h(S_h, A_h)\right) - \mathsf{r}_{H}(S_H) \right]\,.
\end{align*}
In the right-hand side of the equation we see exactly a negative value function with $\lambda = 1$.
\end{proof}
The same result was obtained by~\cite{tiapkin2024generative} in the GFlowNet framework. In addition, we would like to add that this perspective allows to treat the Trajectory Balance loss as a specific instance of Path Consistency Learning \cite{nachum2017bridging}, as it was shown for GFlowNets in~\cite{deleu2024discrete} (see also~\cite{uehara2024understanding} for a discussion in the context of RL-based fine-tuning of diffusion models).
}

\section{Techniques for stability}

In this section, we describe the implementation details of stability techniques and discuss unsuccessful approaches for destruction process training.   

\label{apx:stability_techniques}
{
\paragraph{Shared backbone.} Our architecture is built upon the one used in~\cite{sendera2024improved}, uses time and state encoders, a shared backbone and 2 different final layers for generation and destruction policy modelling. We use an MLP with GELU \cite{hendrycks2023gaussianerrorlinearunits} activation for the shared backbone. For experiments on Manywell, Distorted Manywell, and GAN, this MLP has 4 layers, and for other energies it has 2.

\paragraph{Separate optimisers.} The optimiser for generation policy updates parameters of time and state encoders, the backbone, and the final layer modelling the generation process. The optimiser for destruction policy updates the parameters of the time and state encoders, the backbone, and the final layer modelling the destruction process. For both optimisers, we use Adam \cite{DBLP:journals/corr/KingmaB14} with standard parameters and weight decay of $10^{-7}$.

\paragraph{Target network.}
Using a target network introduces a specific coefficient $\tau$, which defines the update speed of the target network. On each iteration, the target weights are updated by:
\begin{equation}
    \overline{\theta} = (1-\tau) \overline{\theta} + \tau \theta,
\label{eq:ema_update}
\end{equation}
where $\overline{\theta}$ are the target network weights. If $\tau$ is set to 0, the target network is the current policy network, while higher $\tau$ leads to a slower evolution of the target network. For the generation policy network, the second moment loss in \eqref{eq:second_moment} transforms into:
\begin{equation}
    \!\!\!\!\mathbb{D}_{\tilde p}^{\text{gen.}}(p_0\overrightarrow{p_\theta}\|p_{{\rm target}} \textcolor{red}{\overleftarrow{p_{\overline{\varphi}}}})
    =
    \E_{X_{0,\Delta t, \ldots, 1} \sim \tilde{p}(X_{0,\Delta t \ldots, 1} )} \left[ \log\dfrac{p_0(X_0)\overrightarrow{p_\theta}(X_{\Delta t \ldots, 1} \mid X_0)}{\exp(-\mathcal{E}(X_1))\textcolor{red}{\overleftarrow{p_{\overline{\varphi}}}}(X_{0 , \ldots,(T - 1)\dt} \mid X_1) }+\log\hat{Z}\right]^2,
    \label{eq:tb_for_generation_w_target}
\end{equation}
where $\overline{\varphi}$ is the frozen destruction target network weights, and other variables are the same as in \eqref{eq:second_moment}.

Similarly, the second moment loss for the destruction policy transforms into:
\begin{equation}
    \!\!\!\!\mathbb{D}_{\tilde p}^{\text{destr.}}(p_0\textcolor{red}{\overrightarrow{p_{\overline{\theta}}}}\|p_{{\rm target}} \overleftarrow{p_{\varphi}})
    =
    \E_{X_{0,\Delta t, \ldots, 1} \sim \tilde{p}(X_{0,\Delta t \ldots, 1} )} \left[ \log\dfrac{p_0(X_0)\textcolor{red}{\overrightarrow{p_{\overline{\theta}}}}(X_{\Delta t \ldots, 1} \mid X_0)}{\exp(-\mathcal{E}(X_1))\overleftarrow{p_{\varphi}}{(X_{0 , \ldots,(T - 1)\dt} \mid X_1) } }+\log\hat{Z}\right]^2,
    \label{eq:tb_for_destruction_w_target}
\end{equation}
where $\overline{\theta}$ is the frozen generation target generation network, and other variables are as in \eqref{eq:second_moment}.

\paragraph{Prioritised experience replay.}
We use the implementation of PER~\cite{schaul2016prioritized} from \texttt{torchrl} library \cite{bou2023torchrl}. We set the temperature parameter $\alpha$ to $1.0$ and the importance sampling correction coefficient to $0.1$ (similar parameter values were used in~\cite{tiapkin2024generative}).

\paragraph{Better exploration in off-policy methods.}
We use the existing techniques proposed by \cite{lahlou2023theory,sendera2024improved} to facilitate exploration during training. We use a replay buffer of terminal states updated by Langevin dynamics (as studied by \cite{sendera2024improved}), that is used to sample trajectories for training via the destruction process. We also sample trajectories from the current generation policy, but with increased variance on each step, with the added variance annealed to zero over the first $10\ 000$ iterations (similar to the techniques studied by \cite{malkin2023gflownets,lahlou2023theory}). 

\paragraph{Other considerations.} 
In addition to the techniques discussed in~\Cref{sec:techniques} we also present design choices that we tried in our experiments, but which turned out to be unsuccessful. We share them to offer deeper intuition behind the development of our final methodology:
\begin{itemize}[left=0pt,nosep]
\item\textbf{Different parametrisation.}
We tried to predict destruction variance in the log-scale in the same way as the generation variance \eqref{eq:transtion_density_f_learnable}:
\begin{equation}
    \beta'_{\varphi}(X_t, t) = \exp\left\{C_1 \tanh\left({\text{NN}^{(2)}_\theta(X_t, t)}\right)\right\},
\label{eq:transtion_density_b_learnable_unsuccessful}
\end{equation}
but this parametrisation caused rapid fluctuations in the destruction policy and convergence was dramatically slower than with the parametrisation in \eqref{eq:transtion_density_b_learnable}.
\item\textbf{Linear annealing.}
In experiments on synthetic tasks, we set the constant $C_2$ in \eqref{eq:transtion_density_b_learnable} to $0.9$. We also tried to increase $C_2$ linearly from $0$ to $0.9$ over the duration of training. We initially thought this to be efficient since both generation and destruction processes are less stable in the beginning of training when modes are unexplored. 
\end{itemize}
}

\section{Experiment details}

\subsection{Definition of energies}
\label{apx:energies}

We present contour levels of 2-dimensional Gaussian mixtures in Figure~\ref{fig:gmm_contours}.

\begin{figure}[h!]
    \vspace*{-1em}
    \centering
    \includegraphics[width=1.0\textwidth]{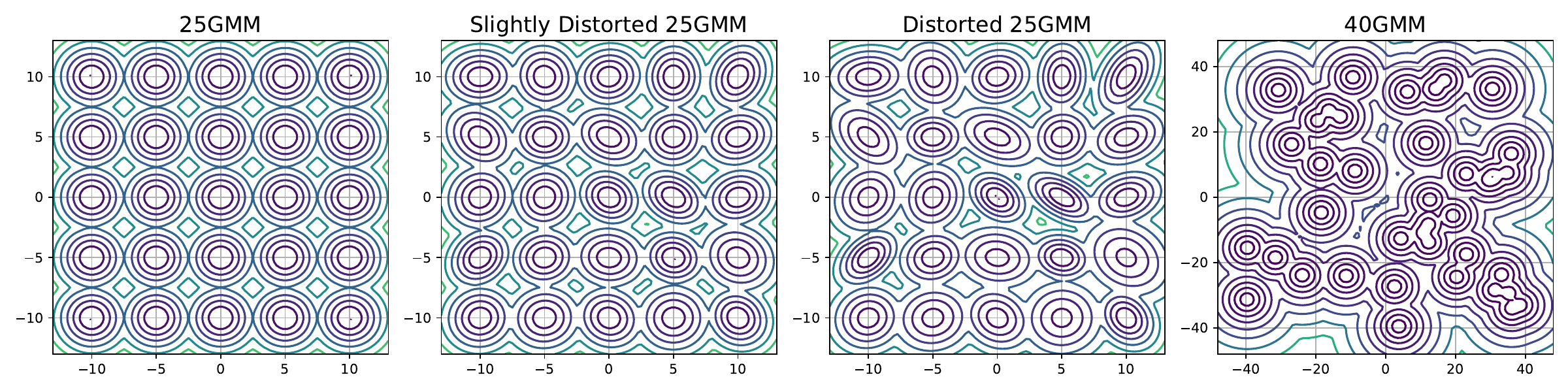}\vspace*{-1em}   
    \caption{Contour levels of 2-dimensional Gaussian mixture densities used in the experiments.}
    \label{fig:gmm_contours} 
    \vspace*{-1em}
\end{figure}

\paragraph{25GMM.} This is a mixture of 25 different Gaussians in a 2-dimensional space. Each component is Gaussian with variance 0.3. The means are arranged on a grid given by the Cartesian product $\{-10, -5, 0, 5, 10\} \times \{-10, -5, 0, 5, 10\}$.

\paragraph{Slightly Distorted 25GMM.} This environment is obtained by a slight modification of 25GMM. The means are located in the same positions, but we distort the initial variance matrices with the following rule:
\begin{equation}    
    C_i = 
    \left(\begin{bmatrix}
        \sqrt{0.3} & 0 \\
        0 & \sqrt{0.3} \\
    \end{bmatrix} + d
    \begin{bmatrix}
        \xi_{i1} & \xi_{i2} \\
        \xi_{i3} & \xi_{i4} \\
    \end{bmatrix}
    \right)^\top \cdot
    \left(\begin{bmatrix}
        \sqrt{0.3} & 0 \\
        0 & \sqrt{0.3} \\
    \end{bmatrix} + d 
    \begin{bmatrix}
        \xi_{i1} & \xi_{i2} \\
        \xi_{i3} & \xi_{i4} \\
    \end{bmatrix}
    \right), \label{eq:distorted_gmm}
\end{equation}
where $C_i$ is the covariance matrix of the $i$-th mode, $\xi_{ij} \sim \mathcal{N}(0, 1)$, and $d$ is set to $0.05$. To achieve a fair comparison, we sample the random variables with a predefined seed $42$, hence all algorithms are compared on the same distribution.

\paragraph{Distorted 25GMM.} The same as Slightly Distorted 25GMM, but with $d$ equal $0.1$.

\paragraph{125GMM.} The same as 25GMM, but in $\mathbb{R}^3$ with means at $\{-10,-5,0,5,10\}^3$.

\paragraph{40GMM.} This distribution is taken from \cite{midgley2022flow}. It consists of 40 equally weighted mixture components with components sampled as:
\begin{align*}
    \pi_{k}(x) &= \mathcal{N}(x;\mu_k, I) \\
    \mu_k &\sim \mathcal{U}(-40, 40).
\end{align*}

\paragraph{Easy/Hard Funnel.} The funnel distribution serves as a classical benchmark in the evaluation of sampling methods. It is defined over a ten-dimensional space, where the first variable, $x_0$, is drawn from a normal distribution with mean 0 and variance 1 (Easy Funnel) or 9 (Hard Funnel), $x_0 \sim \mathcal{N}(0, 1)$. Given $x_0$, the remaining components $x_{1:9}$ follow a multivariate normal distribution with mean vector zero and covariance matrix $\exp(x_0)I$, where $I$ denotes the identity matrix. This conditional relationship is expressed as $x_{1:9} \mid x_0 \sim \mathcal{N}(0, \exp(x_0)I)$.

\paragraph{Manywell.} The distribution is defined over a 32-dimensional space and is constructed as the product of 16 identical 2-dimensional double-well distributions. Each of these two-dimensional components is governed by a potential function, $\mu(x_1, x_2)$, given by $\mu(x_1, x_2) = \exp\left(-x_1^4 + 6x_1^2 + 0.5x_1 - 0.5x_2^2\right)$.

\paragraph{Distorted Manywell.} We modify the Manywell potential function for each 2-dimensional components: 
\begin{equation*}
\mu(x_{2i-1}, x_{2i}) = \exp\left(-a_{i,1} x_{2i-1}^4 + 6 a_{i, 2} x_{2i-1}^2 + 0.5 a_{i,3} x_{2i-1} - 0.5 a_{i,4} x_{2i}^2\right),
\end{equation*}
where $a_{i,j} \sim \mathcal{U}(0.75, 1.25)$. 

\subsection{Metrics}
\label{sec:metrics}

{

The ELBO and EUBO metrics are defined as follows:
\begin{align*}
    \text{ELBO} &= \mathbb{E}_{X_{0, \ldots, 1} \sim p_0(X_0)\overrightarrow{p_\theta}(X_{0,\Delta t \ldots, 1} )} \left[ 
        \log \frac{ \exp(-\mathcal{E}(X_1))\overleftarrow{p_\varphi}(X_{0 , \ldots,(T - 1)\dt} \mid X_1) }
        {p_0(X_0)\overrightarrow{p_\theta}(X_{\Delta t \ldots, 1} \mid X_0)}
    \right]\\
    \text{EUBO} &= \mathbb{E}_{X_{0, \ldots, 1}\sim p_{\rm target}(X_1)p_\varphi(X_{0 , \ldots,(T - 1)\dt} \mid X_1) } \left[ 
        \log \frac{ \exp(-\mathcal{E}(X_1))\overleftarrow{p_\varphi}(X_{0 , \ldots,(T - 1)\dt} \mid X_1) }
        {p_0(X_0)\overrightarrow{p_\theta}(X_{\Delta t \ldots, 1} \mid X_0)}
    \right]
\end{align*}
Both expectations are estimated using $2048$ Monte Carlo samples. For the 2-Wasserstein distance ($W_2^2$), we also use $2048$ ground truth and $2048$ generated samples.

}

\subsection{Synthetic tasks training details}
\label{sec:training_details}
{

Here we describe the chosen hyperparameters for the final results across synthetic tasks in \Cref{sec:synthetic_tasks}. We train all samplers for 25 000 iterations. The diffusion rate $\sigma^2$ is set to 5 for experiments with Gaussian mixtures and to 1 for other energies. The batch size is 512. We apply zero initialisation for the final layers of the neural networks to obtain a uniform output in the beginning of the training. Additionally, we perform clipping on the output of the policy network by $10^{-4}$. We also apply gradient clipping with value $200$. The target neural network update speed $\tau$ is set $0.05$.

We set $C_1$ in \eqref{eq:transtion_density_f_learnable} and $C_2$ in \eqref{eq:transtion_density_b_learnable} to $4.0$ and $0.9$ respectively.

For all experiments, we set lr$_\theta$ (learning rate for generation policy neural network) to $10^{-3}$. The normalisation constant $\log Z$ is trained with learning rate $10^{-1}$. We tune lr$_\varphi$ (learning rate for destruction policy neural network) specifically for each energy and find that optimal lr$_\varphi$ is always less than or equal to lr$_\theta$. For the samplers trained with TB loss, we set lr$_\varphi$ to lr$_\theta$ in experiments with 25GMM and 125GMM, choose the optimal lr$_\varphi$ between \{lr$_\theta$, $10^{-1} \times \text{lr}_\theta$\} for 40GMM, set lr$_\varphi$ to $\text{lr}_\theta$ for Easy Funnel and to $10^{-3} \times \text{lr}_\theta$ for Hard Funnel, we set lr$_\varphi$ to $10^{-5}\times\text{lr}_\theta$ for $T=5$ and to $10^{-4} \times \text{lr}_\theta$ for $T = 10$ and $T=20$ in Manywell and Distorted Manywell. For PIS, we choose optimal lr$_\varphi$ between $\{\text{lr}_\theta, 10^{-1} \times \text{lr}_\theta, 10^{-2} \times \text{lr}_\theta \}$ in experiments with Gaussian mixtures. We use an exponential learning rate schedule, which multiplies learning rate by $\gamma$ after each on-policy gradient step. We set $\gamma$ to $0.99988$ in experiments with 25GMM and 40GMM and to $0.9999$ in other tasks.

For off policy TB, we set the replay ratio to 2. We set the size of the experience replay buffer to 5000 and the size of the buffer used in local search to $600\ 000$. All hyperparameters for local search are taken from \cite{sendera2024improved}. We use exploration factor of $0.3$ for Gaussian mixtures, $0.2$ for Easy Funnel and Hard Funnel, and $0.1$ for Manywell and Distorted Manywell. We note that we use replay buffers and off-policy exploration in all methods and configurations that allow for off-policy training to ensure a fair comparison.

We consider the architecture from \cite{sendera2024improved}, but we stack the time encoding with the state encoding rather than summing them. We set state, time, and hidden dimension sizes to $64$ across all environments with the except for Manywell and Distorted Manywell, where we set these values to $256$.

We use a harmonic discretisation scheme for all mixtures of Gaussians since this time discretisation leads to better sampling quality compared to uniform. It partitions the time space unevenly, making steps large around $t=0$ and smaller closer to $t=1$. The code for harmonic discretisation is presented in~\Cref{app:harmonic}. For other energies, we apply uniform time discretization. 

All models in this section were trained on CPUs. Our implementations are based upon the published code of~\cite{sendera2024improved}.

\definecolor{codegreen}{rgb}{0,0.6,0}
\definecolor{codegray}{rgb}{0.5,0.5,0.5}
\definecolor{codepurple}{rgb}{0.58,0,0.82}
\definecolor{backcolour}{rgb}{0.95,0.95,0.92}

\lstdefinestyle{mystyle}{
    backgroundcolor=\color{backcolour},   
    commentstyle=\color{codegreen},
    keywordstyle=\color{codegreen},
    numberstyle=\tiny\color{codegray},
    stringstyle=\color{codepurple},
    basicstyle=\ttfamily\footnotesize,
    breakatwhitespace=false,         
    breaklines=true,                 
    captionpos=b,                    
    keepspaces=true,                 
    numbers=left,                    
    numbersep=5pt,                  
    showspaces=false,                
    showstringspaces=false,
    showtabs=false,                  
    tabsize=2
}

\lstset{style=mystyle}

\begin{lstlisting}[language=Python, label=app:harmonic]
def harmonic_discretizer(batch_size: int, trajectory_length: int):   
    step_sizes = 1 / arange(1, trajectory_length + 1)
    sum_step_sizes = sum(step_sizes)
    step_proportions = step_sizes / sum_step_sizes
    split_points = cumsum(step_proportions)
    return concatenate([0.0, split_points])
\end{lstlisting}

\subsection{GAN latent sampler training details}
%\vspace{-0.1cm}
\label{apx:gan_details}
{

We use similar setup and hyperparameters to the ones described in~\Cref{sec:training_details}, with a number of differences. We use smaller values of $C_1 = 1.0$ and $C_2 = 0.1$, which we found to improve the training stability and sampling quality in this task. We also use a smaller batch size of $128$ and larger hidden size of $1024$ for the MLP network (note that~\cite{venkatraman2025outsourced} uses a variant of UNet architecture~\cite{ronneberger2015u}, while we use a variant of the architecture from~\cite{sendera2024improved}, see~\Cref{sec:training_details}). We set $\sigma^2 = 1$, which matches the variance of the prior. We set $\text{lr}_\theta = 10^{-3}$, $\text{lr}_\varphi = 0.2 \times \text{lr}_\theta$, $\gamma = 0.9999$. The exploration factor is set to $0.1$ and the replay ratio is set to $5$. We found it beneficial to set the target neural network update speed $\tau$ to a higher value of $0.15$. We train all samplers for 20 000 iterations.

We note that replay buffers and off-policy exploration are used both for the baseline and for our approach to ensure a fair comparison. We do not use the local search method proposed in~\cite{sendera2024improved} as it requires access to gradients of the target energy function, which would require costly differentiation through the GAN generator and ImageReward. Thus all models considered in this experiment require access only to $\mathcal{E}(z)$.

For GAN latent space sampling experiments we used NVIDIA V100 GPUs. Our implementations are based upon the published code of~\cite{sendera2024improved}, as well as the official implementations of~\cite{karras2021alias, radford2021learning, xu2023imagereward}.

Extending the results from \Cref{tab:ffhq}, \Cref{fig:gan_elbos} depicts metric differences across all prompts utilized in this experiment. \vspace{0.2cm}

%\vspace{0.2cm}

\begin{figure}[h!]
  \centering
    \includegraphics[width=1.0\linewidth]{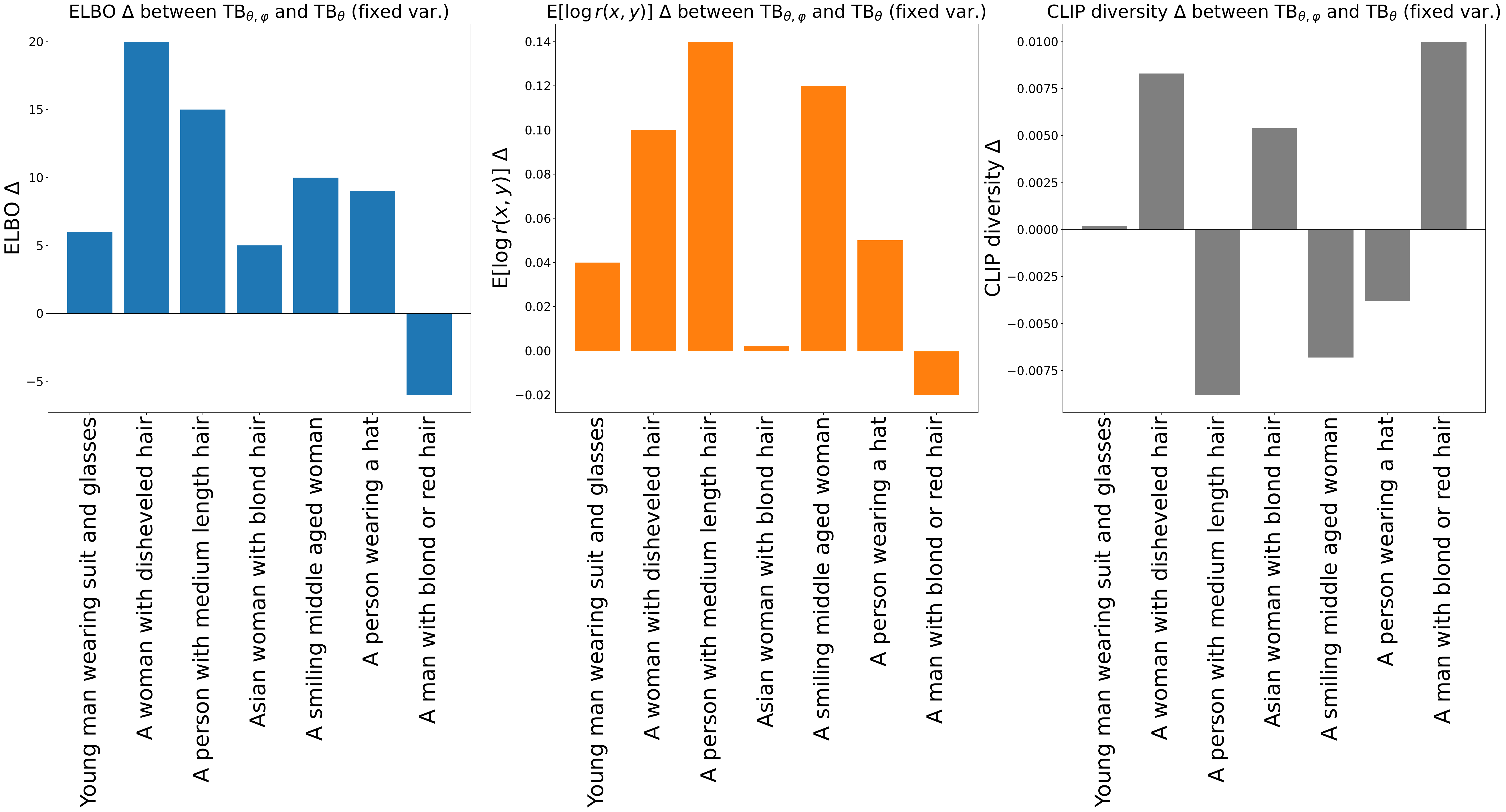} 
  \caption{FFHQ text-conditional latent sampling results across different prompts. The figure depicts the difference in metrics between TB$_{\theta,\varphi}$ (ours) and TB$_\theta$ (fixed var.). \textit{Left:} ELBO, \textit{middle:} average ImageReward, \textit{right:} CLIP diversity. Our method improves reward and ELBO in most prompts. Differences in CLIP diversity are minor (less than 0.01 for all prompts), and on average both methods show the same diversity when rounded up to two decimals.}
\label{fig:gan_elbos}
\end{figure}

}

\newpage

\section{Supplementary tables}
\label{sec:results_tables}
{

See \Cref{tab:25gmm_tables,tab:125gmm_table,tab:40gmm_table,tab:funnel_table,tab:many_well_rrn2_table} on the following pages for full results.

\begin{table}[h!]
    \centering
    \caption{Comparison of 4 algorithms by ELBO, EUBO, and 2-Wasserstein between generated and ground truth samples  with varying number of discretisation steps on 125GMM. Mean and std over 3 runs are specified.
    }
    ELBO  ($\uparrow$)
    \resizebox{\linewidth}{!}{
\begin{tabular}{@{}llcccccc}
% & & \multicolumn{6}{c}{ELBO $\uparrow$} \\
\toprule
Energy $\downarrow$ & Method $\downarrow$ Steps $\rightarrow$ & $T = 2$ & $T = 3$ & $T = 4$ & $T = 5$ & $T = 10$ & $T = 20$ \\
\midrule
\multirow{8}{4em}{125GMM} 
& TB$_\theta$ (fixed var.) & $-7.36\std{0.09}$ &  $-5.32\std{0.01}$ &  $-4.29\std{0.03}$ &  $-3.64\std{0.06}$ &  $-2.39\std{0.03}$ &  $-1.72\std{0.01}$
\\
& TB$_\theta$ (learned var.) & $-4.92\std{0.16}$ &  $-2.27\std{0.06}$ &  $-1.31\std{0.06}$ &  $-0.88\std{0.08}$ &  $-0.36\std{0.02}$ &  $-0.24\std{0.03}$
\\
& TB$_{\theta}$ + TLM$_{\varphi}$ &  $-3.82\std{1.11}$ & $-1.67\std{0.10}$ & $-1.31\std{0.02}$ & $-0.80\std{0.10}$ & $-0.34\std{0.05}$ & $-0.31\std{0.24}$
\\
& TB$_{\theta,\varphi}$ & $-5.34\std{1.09}$ &  $-2.06\std{0.06}$ &  $-1.31\std{0.13}$ &  $-0.86\std{0.08}$ &  $-0.27\std{0.09}$ &  $-0.06\std{0.00}$ 
\\
\cmidrule(lr){2-8}
& PIS$_\theta$ (fixed var.) &  $-6.94\std{0.32}$   & $-5.34\std{0.63}$   & $-4.75\std{0.73}$   & $-3.68\std{0.13}$   & $-4.85\std{0.13}$   & $-2.83\std{0.12}$  \\
& PIS$_\theta$ (learned var.) &  $-3.18\std{0.59}$  & $-2.75\std{0.26}$  & $-2.09\std{0.04}$  & $-2.05\std{0.04}$  & $-1.87\std{0.01}$  & $-2.06\std{0.08}$ \\
& PIS$_{\theta}$ + TLM$_{\varphi}$ &  $-1.06\std{0.34}$ & $-1.29\std{0.15}$ & $-1.31\std{0.13}$ & $-1.05\std{0.28}$ & $-1.20\std{0.20}$ & $-1.80\std{0.31}$\\
& PIS$_{\theta}$ + VarGrad$_{\varphi}$ &  $-4.83\std{0.00}$ & $-4.83\std{0.00}$ & $-2.25\std{0.25}$ & $-2.04\std{0.01}$ & $-2.13\std{0.47}$ & $-2.16\std{0.54}$\\
\bottomrule
\end{tabular}
}

    EUBO ($\downarrow$)
    \resizebox{\linewidth}{!}{
\begin{tabular}{@{}llcccccc}
% & & \multicolumn{6}{c}{ELBO $\uparrow$} \\
\toprule
Energy $\downarrow$ & Method $\downarrow$ Steps $\rightarrow$ & $T = 2$ & $T = 3$ & $T = 4$ & $T = 5$ & $T = 10$ & $T = 20$ \\
\midrule
\multirow{8}{4em}{125GMM} 
& TB$_\theta$ (fixed var.) & $10.71\std{0.33}$ &  $9.51\std{0.08}$ &  $8.82\std{0.14}$ &  $8.15\std{0.15}$ &  $6.04\std{0.32}$ &  $4.86\std{0.88}$ 
\\
& TB$_\theta$ (learned var.) & $2.19\std{0.07}$ &  $1.15\std{0.04}$ &  $0.77\std{0.03}$ &  $0.56\std{0.02}$ &  $0.27\std{0.01}$ &  $0.18\std{0.03}$
\\
& TB$_{\theta}$ + TLM$_{\varphi}$ &  $1.58\std{0.36}$ & $1.01\std{0.02}$ & $0.81\std{0.03}$ & $0.52\std{0.06}$ & $0.30\std{0.06}$ & $0.50\std{0.53}$
\\
& TB$_{\theta,\varphi}$ & $2.06\std{0.31}$ &  $1.15\std{0.10}$ &  $0.72\std{0.03}$ &  $0.51\std{0.04}$ &  $0.21\std{0.05}$ &  $0.06\std{0.00}$
\\
\cmidrule(lr){2-8}
& PIS$_\theta$ (fixed var.) &  $11.68\std{0.69}$   & $10.71\std{0.70}$   & $11.68\std{0.61}$   & $11.68\std{0.61}$   & $12.47\std{0.44}$   & $12.83\std{0.20}$  \\
& PIS$_\theta$ (learned var.) &  $11.70\std{1.57}$  & $12.70\std{0.23}$  & $12.81\std{0.15}$  & $12.70\std{0.15}$  & $12.57\std{0.20}$  & $11.61\std{0.57}$ \\
& PIS$_{\theta}$ + TLM$_{\varphi}$ &  $4.98\std{0.86}$ & $4.53\std{0.36}$ & $4.01\std{0.39}$ & $3.30\std{1.56}$ & $3.67\std{0.87}$ & $4.37\std{0.44}$\\
& PIS$_{\theta}$ + VarGrad$_{\varphi}$ &  $8.80\std{2.96}$ & $10.50\std{3.13}$ & $11.89\std{1.12}$ & $11.42\std{1.41}$ & $7.88\std{3.26}$ & $7.60\std{3.75}$\\
\bottomrule
\end{tabular}
}

    2-Wasserstein ($\downarrow$)
    \resizebox{\linewidth}{!}{
\begin{tabular}{@{}llcccccc}
% & & \multicolumn{6}{c}{ELBO $\uparrow$} \\
\toprule
Energy $\downarrow$ & Method $\downarrow$ Steps $\rightarrow$ & $T = 2$ & $T = 3$ & $T = 4$ & $T = 5$ & $T = 10$ & $T = 20$ \\
\midrule
\multirow{9}{4em}{125GMM} 
& TB$_\theta$ (fixed var.) & $8.12\std{0.12}$ &  $7.61\std{0.15}$ &  $7.26\std{0.14}$ &  $6.95\std{0.10}$ &  $6.28\std{0.03}$ &  $5.65\std{0.23}$ 
\\
& TB$_\theta$ (learned var.) & $3.46\std{0.09}$ &  $3.09\std{0.13}$ &  $2.95\std{0.20}$ &  $2.74\std{0.04}$ &  $2.43\std{0.05}$ &  $1.93\std{0.05}$
\\
& TB$_{\theta}$ + TLM$_{\varphi}$ &  $1.97\std{0.16}$ & $2.82\std{0.15}$ & $2.60\std{0.14}$ & $2.23\std{0.31}$ & $2.27\std{0.36}$ & $2.37\std{1.04}$
\\
& TB$_{\theta,\varphi}$ & $2.54\std{0.46}$ &  $2.95\std{0.29}$ &  $2.47\std{0.08}$ &  $2.21\std{0.19}$ &  $1.85\std{0.11}$ &  $1.84\std{0.09}$
\\
\cmidrule(lr){2-8}
& PIS$_\theta$ (fixed var.) &  $7.60\std{0.14}$   & $7.01\std{0.36}$   & $7.32\std{0.54}$   & $6.91\std{0.48}$   & $7.19\std{0.31}$   & $7.94\std{0.47}$  \\
& PIS$_\theta$ (learned var.) &  $7.28\std{1.92}$  & $7.32\std{0.59}$  & $5.82\std{0.06}$  & $5.91\std{0.14}$  & $5.73\std{0.07}$  & $6.12\std{0.26}$ \\
& PIS$_{\theta}$ + TLM$_{\varphi}$ &  $3.25\std{0.82}$ & $4.61\std{0.33}$ & $4.96\std{0.05}$ & $4.24\std{0.85}$ & $5.01\std{0.18}$ & $5.34\std{0.20}$\\
& PIS$_{\theta}$ + VarGrad$_{\varphi}$ &  $6.97\std{1.23}$ & $7.83\std{1.06}$ & $5.68\std{0.07}$ & $5.54\std{0.04}$ & $6.22\std{1.09}$ & $6.38\std{1.20}$\\
\cmidrule(lr){2-8}
& Ground Truth & \multicolumn{6}{c}{1.81\std{0.11}}
\\\bottomrule
\end{tabular}
}

    \label{tab:125gmm_table}
\end{table}

\begin{table}[]
    \centering
    \caption{Comparison of 4 algorithms by ELBO, EUBO and 2-Wasserstein between generated and ground truth samples with varying number of discretisation steps on 3 variants of 25GMM. Mean and std over 3 runs are specified.
    }
    \scalebox{0.675}{
        \begin{minipage}{\textwidth}
        \centering
            ELBO  ($\uparrow$)\\
            \resizebox{\linewidth}{!}{
\begin{tabular}{@{}llcccccc}
% & & \multicolumn{6}{c}{ELBO $\uparrow$} \\
\toprule
Energy $\downarrow$ & Method $\downarrow$ Steps $\rightarrow$ & $T = 2$ & $T = 3$ & $T = 4$ & $T = 5$ & $T = 10$ & $T = 20$ \\
\midrule
\multirow{8}{4em}{$25$GMM} 
& TB$_\theta$ (fixed var.) & $-4.92\std{0.06}$ &  $-3.53\std{0.05}$ &  $-2.80\std{0.03}$ &  $-2.35\std{0.04}$ &  $-1.53\std{0.01}$ &  $-1.11\std{0.02}$ 
\\
& TB$_\theta$ (learned var.) & $-3.10\std{0.07}$ &  $-1.44\std{0.07}$ &  $-0.80\std{0.07}$ &  $-0.54\std{0.02}$ & $-0.30\std{0.15}$ &  $-0.12\std{0.02}$ 
\\
& TB$_{\theta}$ + TLM$_{\varphi}$ &  $-2.05\std{0.55}$ & $-1.01\std{0.09}$ & $-0.59\std{0.01}$ & $-0.36\std{0.01}$ & $-0.07\std{0.01}$ & $-0.18\std{0.10}$
\\
& TB$_{\theta,\varphi}$ & $-1.83\std{0.04}$ &  $-1.19\std{0.03}$ &  $-0.70\std{0.01}$ &  $-0.42\std{0.03}$ &  $-0.09\std{0.02}$ &  $-0.03\std{0.01}$ 
\\
\cmidrule(lr){2-8}
& PIS$_\theta$ (fixed var.) &  $-4.36\std{0.04}$   & $-3.23\std{0.01}$   & $-2.65\std{0.01}$   & $-2.36\std{0.04}$   & $-1.92\std{0.09}$   & $-1.64\std{0.14}$  
\\
& PIS$_\theta$ (learned var.) &  $-2.14\std{0.81}$  & $-1.45\std{0.02}$  & $-1.37\std{0.01}$  & $-1.31\std{0.01}$  & $-1.20\std{0.01}$  & $-1.34\std{0.13}$ 
\\
& PIS$_{\theta}$ + TLM$_{\varphi}$ &  $-1.13\std{0.19}$ & $-0.76\std{0.04}$ & $-0.82\std{0.16}$ & $-1.13\std{0.01}$ & $-1.01\std{0.17}$ & $-1.81\std{0.30}$
\\
& PIS$_{\theta}$ + VarGrad$_{\varphi}$ &  $-2.37\std{0.00}$ & $-1.75\std{0.38}$ & $-1.29\std{0.01}$ & $-1.18\std{0.10}$ & $-1.19\std{0.01}$ & $-1.13\std{0.01}$
\\
\midrule
\multirow{8}{4em}{Slightly Distorted $25$GMM} 
& TB$_\theta$ (learned var.) & $-3.12\std{0.13}$ &  $-1.47\std{0.03}$ &  $-0.82\std{0.04}$ &  $-0.52\std{0.03}$ &  $-0.22\std{0.01}$ &  $-1.14\std{1.45}$
\\
& TB$_\theta$ (fixed var.) & $-4.80\std{0.07}$ &  $-3.54\std{0.09}$ &  $-2.88\std{0.03}$ &  $-2.44\std{0.04}$ &  $-1.56\std{0.02}$ &  $-1.12\std{0.02}$ 
\\
& TB$_{\theta}$ + TLM$_{\varphi}$ &  $-2.16\std{0.57}$ & $-0.99\std{0.03}$ & $-0.63\std{0.08}$ & $-0.35\std{0.02}$ & $-0.12\std{0.04}$ & $-0.04\std{0.01}$
\\
& TB$_{\theta,\varphi}$ & $-2.49\std{0.34}$ &  $-1.17\std{0.04}$ &  $-0.66\std{0.05}$ &  $-0.42\std{0.03}$ &  $-0.11\std{0.02}$ &  $-0.04\std{0.00}$
\\
\cmidrule(lr){2-8}
& PIS$_\theta$ (fixed var.) &  $-4.15\std{0.05}$   & $-3.39\std{0.04}$   & $-3.14\std{0.45}$   & $-2.51\std{0.09}$   & $-1.97\std{0.09}$   & $-1.80\std{0.07}$  \\
& PIS$_\theta$ (learned var.) &  $-1.21\std{0.27}$  & $-1.47\std{0.02}$  & $-1.40\std{0.01}$  & $-1.31\std{0.02}$  & $-1.29\std{0.05}$  & $-1.51\std{0.19}$ \\
& PIS$_{\theta}$ + TLM$_{\varphi}$ &  $-0.97\std{0.21}$ & $-0.62\std{0.19}$ & $-0.61\std{0.09}$ & $-0.92\std{0.17}$ & $-1.08\std{0.09}$ & $-1.25\std{0.10}$ \\ 
& PIS$_{\theta}$ + VarGrad$_{\varphi}$ &  $-2.38\std{0.69}$ & $-1.48\std{0.15}$ & $-1.30\std{0.01}$ & $-1.22\std{0.10}$ & $-1.04\std{0.11}$ & $-1.19\std{0.05}$
\\
\midrule
\multirow{8}{4em}{Distorted $25$GMM} 
& TB$_\theta$ (fixed var.) & $-4.48\std{0.07}$ &  $-3.51\std{0.05}$ &  $-2.88\std{0.03}$ &  $-2.45\std{0.02}$ &  $-1.60\std{0.03}$ &  $-1.13\std{0.01}$ 
\\
& TB$_\theta$ (learned var.) & $-3.09\std{0.02}$ &  $-1.47\std{0.05}$ &  $-0.84\std{0.05}$ &  $-0.57\std{0.02}$ &  $-0.22\std{0.02}$ &  $-0.12\std{0.02}$
\\
& TB$_{\theta}$ + TLM$_{\varphi}$ &  $-2.20\std{0.31}$ & $-1.18\std{0.05}$ & $-0.63\std{0.04}$ & $-0.34\std{0.02}$ & $-0.12\std{0.02}$ & $-0.04\std{0.01}$
\\
& TB$_{\theta,\varphi}$ & $-2.27\std{0.08}$ &  $-1.37\std{0.04}$ &  $-0.77\std{0.06}$ &  $-0.46\std{0.01}$ &  $-0.12\std{0.01}$ &  $-0.05\std{0.00}$
\\
\cmidrule(lr){2-8}
& PIS$_\theta$ (fixed var.) &  $-4.04\std{0.17}$   & $-3.28\std{0.08}$   & $-2.79\std{0.13}$   & $-2.60\std{0.03}$   & $-2.09\std{0.04}$   & $-2.09\std{0.26}$  
\\
& PIS$_\theta$ (learned var.) &  $-1.33\std{0.17}$  & $-1.71\std{0.16}$  & $-1.52\std{0.15}$  & $-1.78\std{0.16}$  & $-1.65\std{0.23}$  & $-1.91\std{0.06}$ \\
& PIS$_{\theta}$ + TLM$_{\varphi}$ &  $-1.32\std{0.23}$ & $-1.25\std{0.47}$ & $-1.06\std{0.11}$ & $-1.05\std{0.34}$ & $-1.31\std{0.24}$ & $-1.62\std{0.13}$
\\
& PIS$_{\theta}$ + VarGrad$_{\varphi}$ &  $-1.89\std{0.36}$ & $-1.32\std{0.55}$ & $-1.41\std{0.20}$ & $-1.37\std{0.21}$ & $-1.29\std{0.03}$ & $-1.43\std{0.09}$
\\
\bottomrule
\end{tabular}
}

            EUBO ($\downarrow$)\\
            \resizebox{\linewidth}{!}{
\begin{tabular}{@{}llcccccc}
% & & \multicolumn{6}{c}{ELBO $\uparrow$} \\
\toprule
Energy $\downarrow$ & Method $\downarrow$ Steps $\rightarrow$ & $T = 2$ & $T = 3$ & $T = 4$ & $T = 5$ & $T = 10$ & $T = 20$ \\
\midrule
\multirow{8}{4em}{$25$GMM} 
& TB$_\theta$ (fixed var.) & $7.01\std{0.08}$ &  $5.91\std{0.07}$ &  $5.33\std{0.13}$ &  $5.07\std{0.22}$ &  $4.11\std{0.52}$ &  $2.43\std{0.13}$
\\ 
& TB$_\theta$ (learned var.) & $1.41\std{0.04}$ &  $0.74\std{0.01}$ &  $0.47\std{0.01}$ &  $0.35\std{0.01}$ &  $0.90\std{1.05}$ &  $0.09\std{0.00}$
\\
& TB$_{\theta}$ + TLM$_{\varphi}$ &  $0.99\std{0.14}$ & $0.56\std{0.05}$ & $0.36\std{0.01}$ & $0.24\std{0.01}$ & $0.07\std{0.00}$ & $1.87\std{1.66}$
\\
& TB$_{\theta,\varphi}$ & $0.94\std{0.03}$ &  $0.60\std{0.02}$ &  $0.37\std{0.02}$ &  $0.24\std{0.00}$ &  $0.07\std{0.01}$ &  $0.03\std{0.00}$
\\
\cmidrule(lr){2-8}
& PIS$_\theta$ (fixed var.) &  $7.07\std{0.23}$   & $7.19\std{0.22}$   & $7.31\std{0.54}$   & $7.12\std{0.20}$   & $8.51\std{0.06}$   & $8.51\std{0.06}$  \\
& PIS$_\theta$ (learned var.) &  $8.41\std{0.20}$  & $8.40\std{0.06}$  & $8.42\std{0.04}$  & $8.44\std{0.03}$  & $8.40\std{0.16}$  & $7.02\std{0.05}$ \\
& PIS$_{\theta}$ + TLM$_{\varphi}$ &  $6.03\std{0.61}$ & $5.41\std{0.43}$ & $3.45\std{0.63}$ & $5.50\std{2.13}$ & $4.25\std{1.77}$ & $7.61\std{1.29}$\\
& PIS$_{\theta}$ + VarGrad$_{\varphi}$ &  $8.40\std{0.06}$ & $8.42\std{0.08}$ & $8.43\std{0.08}$ & $6.76\std{1.25}$ & $3.58\std{0.49}$ & $5.71\std{2.01}$
\\
\midrule
\multirow{8}{4em}{Slightly Distorted $25$GMM} 
& TB$_\theta$ (fixed var.) & $7.41\std{0.17}$ &  $6.20\std{0.09}$ &  $5.38\std{0.17}$ &  $4.69\std{0.15}$ &  $3.71\std{0.11}$ &  $2.60\std{0.12}$ 
\\
& TB$_\theta$ (learned var.) & $1.39\std{0.03}$ &  $0.74\std{0.01}$ &  $0.47\std{0.01}$ &  $0.35\std{0.01}$ &  $0.16\std{0.00}$ &  $7.06\std{9.86}$
\\
& TB$_{\theta}$ + TLM$_{\varphi}$ &  $1.06\std{0.32}$ & $0.56\std{0.03}$ & $0.40\std{0.02}$ & $0.26\std{0.02}$ & $0.11\std{0.03}$ & $0.04\std{0.01}$
\\
& TB$_{\theta,\varphi}$ & $1.13\std{0.11}$ &  $0.57\std{0.02}$ &  $0.38\std{0.02}$ &  $0.26\std{0.02}$ &  $0.09\std{0.01}$ &  $0.03\std{0.00}$
\\
\cmidrule(lr){2-8}
& PIS$_\theta$ (fixed var.) &  $7.58\std{0.23}$   & $8.16\std{0.14}$   & $8.33\std{0.21}$   & $8.38\std{0.08}$   & $8.04\std{0.58}$   & $8.37\std{0.13}$  \\
& PIS$_\theta$ (learned var.) &  $8.05\std{0.54}$  & $8.38\std{0.05}$  & $8.38\std{0.05}$  & $8.41\std{0.03}$  & $7.79\std{0.46}$  & $7.52\std{0.22}$ 
\\
& PIS$_{\theta}$ + TLM$_{\varphi}$ &  $5.81\std{1.88}$ & $2.37\std{0.42}$ & $3.90\std{0.12}$ & $3.59\std{1.95}$ & $2.96\std{0.33}$ & $6.01\std{1.02}$
\\
& PIS$_{\theta}$ + VarGrad$_{\varphi}$ &  $8.37\std{0.05}$ & $8.37\std{0.05}$ & $7.69\std{1.00}$ & $5.10\std{1.33}$ & $3.26\std{0.14}$ & $3.79\std{0.93}$
\\
\midrule
\multirow{8}{4em}{Distorted $25$GMM} 
& TB$_\theta$ (fixed var.) & $9.00\std{0.41}$ &  $7.08\std{0.41}$ &  $5.93\std{0.32}$ &  $5.23\std{0.17}$ &  $3.85\std{0.35}$ &  $2.80\std{0.17}$
\\
& TB$_\theta$ (learned var.) & $1.44\std{0.03}$ &  $0.78\std{0.02}$ &  $0.49\std{0.00}$ &  $0.37\std{0.01}$ &  $0.17\std{0.01}$ &  $0.10\std{0.00}$
\\
& TB$_{\theta}$ + TLM$_{\varphi}$ &  $1.13\std{0.15}$ & $0.73\std{0.04}$ & $0.41\std{0.02}$ & $0.27\std{0.01}$ & $0.11\std{0.01}$ & $0.04\std{0.00}$
\\
& TB$_{\theta,\varphi}$ & $1.16\std{0.03}$ &  $0.79\std{0.03}$ &  $0.46\std{0.02}$ &  $0.29\std{0.01}$ &  $0.09\std{0.00}$ &  $0.05\std{0.00}$
\\
\cmidrule(lr){2-8}
& PIS$_\theta$ (fixed var.) &  $8.47\std{0.05}$   & $8.26\std{0.17}$   & $8.47\std{0.05}$   & $8.20\std{0.43}$   & $8.38\std{0.09}$   & $8.47\std{0.05}$  \\
& PIS$_\theta$ (learned var.) &  $8.40\std{0.10}$  & $8.36\std{0.05}$  & $8.36\std{0.05}$  & $8.39\std{0.03}$  & $8.30\std{0.06}$  & $8.43\std{0.07}$ \\
& PIS$_{\theta}$ + TLM$_{\varphi}$ &  $8.36\std{0.05}$ & $8.22\std{0.24}$ & $5.70\std{0.78}$ & $7.14\std{0.91}$ & $5.13\std{2.35}$ & $8.39\std{0.08}$\\
& PIS$_{\theta}$ + VarGrad$_{\varphi}$ &  $8.36\std{0.05}$ & $7.68\std{0.99}$ & $8.10\std{0.37}$ & $6.13\std{1.57}$ & $5.09\std{0.97}$ & $7.57\std{0.69}$\\
\bottomrule
\end{tabular}
}

            2-Wasserstein ($\downarrow$)\\
            \resizebox{\linewidth}{!}{
\begin{tabular}{@{}llcccccc}
% & & \multicolumn{6}{c}{Wassertein-2} \\
\toprule
Energy $\downarrow$ & Method $\downarrow$ Steps $\rightarrow$ & $T = 2$ & $T = 3$ & $T = 4$ & $T = 5$ & $T = 10$ & $T = 20$ \\
\midrule
\multirow{9}{4em}{$25$GMM} 
& TB$_\theta$ (fixed var.) & $6.47\std{0.05}$ &  $6.08\std{0.07}$ &  $5.70\std{0.02}$ &  $5.52\std{0.08}$ &  $5.03\std{0.17}$ &  $4.39\std{0.01}$
\\
& TB$_\theta$ (learned var.) & $2.47\std{0.03}$ &  $2.31\std{0.10}$ &  $2.18\std{0.06}$ &  $2.04\std{0.10}$ &  $2.11\std{0.90}$ &  $1.46\std{0.30}$
\\
& TB$_{\theta}$ + TLM$_{\varphi}$ &  $1.37\std{0.16}$ & $1.45\std{0.03}$ & $1.07\std{0.11}$ & $1.01\std{0.09}$ & $0.91\std{0.03}$ & $2.14\std{1.68}$
\\
& TB$_{\theta,\varphi}$ & $1.77\std{0.17}$ &  $1.76\std{0.27}$ &  $1.07\std{0.15}$ &  $1.13\std{0.11}$ &  $1.05\std{0.20}$ &  $1.17\std{0.24}$
\\
\cmidrule(lr){2-8}
& PIS$_\theta$ (fixed var.) &  $5.71\std{0.10}$   & $5.43\std{0.07}$   & $5.22\std{0.04}$   & $5.24\std{0.08}$   & $5.63\std{0.29}$   & $5.31\std{0.36}$  \\
& PIS$_\theta$ (learned var.) &  $5.90\std{1.46}$  & $4.68\std{0.08}$  & $4.56\std{0.09}$  & $4.46\std{0.06}$  & $4.58\std{0.08}$  & $4.87\std{0.26}$ \\
& PIS$_{\theta}$ + TLM$_{\varphi}$ &  $4.09\std{0.53}$ & $3.47\std{0.26}$ & $3.75\std{0.38}$ & $4.39\std{0.04}$ & $4.37\std{0.11}$ & $6.10\std{1.24}$\\
& PIS$_{\theta}$ + VarGrad$_{\varphi}$ &  $6.98\std{0.03}$ & $5.80\std{1.08}$ & $4.35\std{0.09}$ & $4.17\std{0.27}$ & $4.38\std{0.05}$ & $4.40\std{0.11}$
\\
\cmidrule(lr){2-8}
& Ground Truth & \multicolumn{6}{c}{1.10\std{0.14}}
\\ 
\midrule
\multirow{9}{4em}{Slightly Distorted $25$GMM} 
& TB$_\theta$ (fixed var.) & $6.78\std{0.05}$ &  $6.40\std{0.03}$ &  $5.82\std{0.11}$ &  $5.59\std{0.02}$ &  $5.16\std{0.13}$ &  $4.66\std{0.09}$
\\
& TB$_\theta$ (learned var.) & $2.55\std{0.06}$ &  $2.27\std{0.06}$ &  $2.17\std{0.10}$ &  $1.92\std{0.12}$ &  $1.62\std{0.12}$ &  $6.66\std{7.38}$
\\
& TB$_{\theta}$ + TLM$_{\varphi}$ &  $1.56\std{0.17}$ & $1.54\std{0.22}$ & $1.14\std{0.15}$ & $0.98\std{0.10}$ & $1.15\std{0.19}$ & $0.83\std{0.06}$
\\
& TB$_{\theta,\varphi}$ & $2.06\std{0.27}$ &  $1.52\std{0.11}$ &  $1.28\std{0.20}$ &  $1.04\std{0.13}$ &  $1.10\std{0.20}$ &  $1.04\std{0.13}$
\\
\cmidrule(lr){2-8}
& PIS$_\theta$ (fixed var.) &  $6.40\std{0.39}$   & $5.83\std{0.14}$   & $5.88\std{0.46}$   & $5.73\std{0.25}$   & $5.47\std{0.42}$   & $6.14\std{0.37}$  \\
& PIS$_\theta$ (learned var.) &  $4.36\std{0.92}$  & $4.76\std{0.03}$  & $4.61\std{0.07}$  & $4.56\std{0.14}$  & $4.63\std{0.11}$  & $4.94\std{0.27}$ \\
& PIS$_{\theta}$ + TLM$_{\varphi}$ &  $3.56\std{0.64}$ & $3.15\std{0.52}$ & $3.22\std{0.14}$ & $3.98\std{0.42}$ & $4.37\std{0.19}$ & $4.64\std{0.16}$\\
& PIS$_{\theta}$ + VarGrad$_{\varphi}$ &  $6.17\std{1.16}$ & $4.92\std{0.75}$ & $4.33\std{0.05}$ & $4.31\std{0.19}$ & $4.24\std{0.12}$ & $4.45\std{0.09}$\\
\cmidrule(lr){2-8}
& Ground Truth & \multicolumn{6}{c}{1.06\std{0.14}}
\\
\midrule
\multirow{9}{4em}{Distorted $25$GMM} 
& TB$_\theta$ (fixed var.) & $8.29\std{0.30}$ &  $7.35\std{0.12}$ &  $6.52\std{0.12}$ &  $6.18\std{0.11}$ &  $5.40\std{0.30}$ &  $4.84\std{0.10}$
\\
& TB$_\theta$ (learned var.) & $2.68\std{0.06}$ &  $2.28\std{0.08}$ &  $2.16\std{0.12}$ &  $2.05\std{0.04}$ &  $1.74\std{0.05}$ &  $1.33\std{0.10}$
\\
& TB$_{\theta}$ + TLM$_{\varphi}$ &  $2.03\std{0.23}$ & $1.58\std{0.12}$ & $1.25\std{0.12}$ & $1.00\std{0.20}$ & $0.97\std{0.11}$ & $0.92\std{0.11}$
\\
& TB$_{\theta,\varphi}$ & $2.37\std{0.17}$ &  $1.78\std{0.14}$ &  $1.40\std{0.21}$ &  $1.11\std{0.21}$ &  $1.21\std{0.26}$ &  $1.01\std{0.12}$
\\
\cmidrule(lr){2-8}
& PIS$_\theta$ (fixed var.) &  $6.99\std{0.05}$   & $6.07\std{0.35}$   & $6.11\std{0.49}$   & $5.86\std{0.11}$   & $5.90\std{0.09}$   & $6.68\std{0.45}$  \\
& PIS$_\theta$ (learned var.) &  $4.55\std{0.78}$  & $5.60\std{0.74}$  & $5.14\std{0.73}$  & $6.38\std{0.43}$  & $5.57\std{0.58}$  & $6.61\std{0.27}$ \\
& PIS$_{\theta}$ + TLM$_{\varphi}$ &  $5.14\std{1.30}$ & $4.99\std{1.40}$ & $4.12\std{0.19}$ & $4.43\std{1.15}$ & $5.14\std{0.73}$ & $5.47\std{0.66}$\\
& PIS$_{\theta}$ + VarGrad$_{\varphi}$ &  $5.98\std{1.38}$ & $4.43\std{1.86}$ & $4.87\std{0.92}$ & $4.84\std{0.83}$ & $4.65\std{0.06}$ & $4.95\std{0.36}$
\\
\cmidrule(lr){2-8}
& Ground Truth & \multicolumn{6}{c}{1.06\std{0.14}}
\\\bottomrule
\end{tabular}
}

        \end{minipage}
    }
    \label{tab:25gmm_tables}
\end{table}

\begin{table}[]
    \centering
    \caption{Comparison of 4 algorithms by ELBO, EUBO, and 2-Wasserstein between generated and ground truth samples  with varying number of discretisation steps on 40GMM. Mean and std over 3 runs are specified.
    }
    ELBO  ($\uparrow$)
    \resizebox{\linewidth}{!}{
\begin{tabular}{@{}llcccccccc}
% & & \multicolumn{6}{c}{ELBO $\uparrow$} \\
\toprule
Energy $\downarrow$ & Method $\downarrow$ Steps $\rightarrow$ & $T = 2$ & $T = 3$ & $T = 4$ & $T = 5$ & $T = 10$ & $T = 15$ & $T = 20$ & $T = 25$ \\
\midrule
\multirow{8}{4em}{40GMM} 
& TB$_\theta$ (fixed var.) & $-4.52\std{1.06}$ &  $-12.95\std{13.36}$ &  $-3.75\std{0.81}$ &  $-3.60\std{1.03}$ &  $-2.40\std{0.06}$ &  $-2.30\std{0.02}$ &  $-2.83\std{0.82}$ &  $-2.29\std{0.12}$
\\
& TB$_\theta$ (learned var.) & $-5.30\std{0.65}$ &  $-3.96\std{0.92}$ &  $-4.96\std{1.15}$ &  $-3.58\std{0.04}$ &  $-2.88\std{0.64}$ &  $-2.26\std{0.13}$ &  $-2.08\std{0.14}$ &  $-1.94\std{0.04}$
\\
& TB$_{\theta}$ + TLM$_{\varphi}$ &  $-3.76\std{0.34}$ & $-2.64\std{0.12}$ & $-2.03\std{0.06}$ & $-1.74\std{0.03}$ & $-1.28\std{0.03}$ & $-1.14\std{0.13}$ & $-1.25\std{0.19}$ & $-1.34\std{0.15}$
\\
& TB$_{\theta,\varphi}$ & $-4.12\std{0.31}$ &  $-2.60\std{0.12}$ &  $-2.44\std{0.17}$ &  $-1.90\std{0.03}$ &  $-1.26\std{0.11}$ &  $-1.15\std{0.10}$ &  $-0.99\std{0.09}$ &  $-1.03\std{0.21}$ 
\\
\cmidrule(lr){2-10}
& PIS$_\theta$ (fixed var.) &  $-4.45\std{0.03}$   & $-3.67\std{0.70}$   & $-3.56\std{0.66}$   & $-3.95\std{0.00}$   & $-3.83\std{0.01}$   & $-3.33\std{0.64}$   & $-3.31\std{0.64}$   & $-3.76\std{0.00}$  \\
& PIS$_\theta$ (learned var.) &  $-3.34\std{0.49}$  & $-2.93\std{0.53}$  & $-2.90\std{0.55}$  & $-2.87\std{0.58}$  & $-2.38\std{0.01}$  & $-3.20\std{0.69}$  & $-2.77\std{0.65}$  & $-2.29\std{0.08}$ \\
& PIS$_{\theta}$ + TLM$_{\varphi}$ &  $-2.53\std{0.11}$ & $-2.62\std{0.00}$ & $-2.22\std{0.08}$ & $-2.34\std{0.02}$ & $-2.29\std{0.05}$ & $-2.33\std{0.01}$ & $-2.32\std{0.01}$ & $-2.32\std{0.01}$\\
& PIS$_{\theta}$ + VarGrad$_{\varphi}$ &  $-2.69\std{0.04}$ & $-2.55\std{0.13}$ & $-2.47\std{0.06}$ & $-2.79\std{0.64}$ & $-2.73\std{0.69}$ & $-2.72\std{0.69}$ & $-2.27\std{0.09}$ & $-2.26\std{0.09}$\\
\bottomrule
\end{tabular}
}

    EUBO ($\downarrow$)
    \resizebox{\linewidth}{!}{
\begin{tabular}{@{}llcccccccc}
% & & \multicolumn{6}{c}{ELBO $\uparrow$} \\
\toprule
Energy $\downarrow$ & Method $\downarrow$ Steps $\rightarrow$ & $T = 2$ & $T = 3$ & $T = 4$ & $T = 5$ & $T = 10$ & $T = 15$ & $T = 20$ & $T = 25$ \\
\midrule
\multirow{8}{4em}{40GMM} 
& TB$_\theta$ (fixed var.) &  $141.80\std{22.85}$   & $97.40\std{4.91}$   & $251.06\std{95.60}$   & $102.98\std{29.73}$   & $102.84\std{10.99}$   & $2732.36\std{3753.31}$   & $121.65\std{26.31}$   & $98.32\std{22.12}$  \\
& TB$_\theta$ (learned var.) &  $5.33\std{0.15}$  & $7.86\std{2.26}$  & $4.44\std{0.77}$  & $38.06\std{47.29}$  & $5.98\std{2.29}$  & $6.89\std{4.76}$  & $5.29\std{3.08}$  & $37.75\std{46.22}$ \\
& TB$_{\theta}$ + TLM$_{\varphi}$ &  $2.74\std{0.19}$ & $3.38\std{0.18}$ & $3.88\std{0.25}$ & $6.11\std{1.44}$ & $7.25\std{0.88}$ & $5.84\std{0.50}$ & $8.26\std{2.03}$ & $14.95\std{10.41}$ \\
& TB$_{\theta,\varphi}$ &  $3.17\std{0.14}$ & $3.20\std{0.52}$ & $3.18\std{0.29}$ & $5.34\std{0.66}$ & $4.56\std{0.22}$ & $4.64\std{0.17}$ & $4.70\std{0.36}$ & $5.90\std{1.03}$
\\
\cmidrule(lr){2-10}
& PIS$_\theta$ (fixed var.) &  $108.22\std{0.38}$   & $106.93\std{1.45}$   & $106.20\std{2.48}$   & $108.22\std{0.38}$   & $108.22\std{0.38}$   & $108.22\std{0.38}$   & $106.84\std{1.57}$   & $106.39\std{2.21}$  \\
& PIS$_\theta$ (learned var.) &  $106.96\std{0.43}$  & $106.97\std{0.45}$  & $104.85\std{2.55}$  & $106.97\std{0.46}$  & $106.99\std{0.48}$  & $104.08\std{3.64}$  & $104.94\std{3.17}$  & $99.36\std{5.90}$ \\
& PIS$_{\theta}$ + TLM$_{\varphi}$ &  $106.95\std{0.42}$ & $106.95\std{0.44}$ & $106.95\std{0.42}$ & $106.99\std{0.37}$ & $106.98\std{0.40}$ & $107.00\std{0.40}$ & $99.51\std{11.05}$ & $105.41\std{2.71}$\\
& PIS$_{\theta}$ + VarGrad$_{\varphi}$ &  $106.95\std{0.42}$ & $106.95\std{0.44}$ & $106.95\std{0.42}$ & $106.99\std{0.37}$ & $106.98\std{0.40}$ & $102.54\std{5.94}$ & $106.51\std{1.12}$ & $104.24\std{4.27}$
\\
\bottomrule
\end{tabular}
}

    2-Wasserstein ($\downarrow$)
    \resizebox{\linewidth}{!}{
\begin{tabular}{@{}llcccccccc}
% & & \multicolumn{6}{c}{ELBO $\uparrow$} \\
\toprule
Energy $\downarrow$ & Method $\downarrow$ Steps $\rightarrow$ & $T = 2$ & $T = 3$ & $T = 4$ & $T = 5$ & $T = 10$ & $T = 15$ & $T = 20$ & $T = 25$ \\
\midrule
\multirow{9}{4em}{40GMM} 
& TB$_\theta$ (fixed var.) &  $38.72\std{5.09}$   & $33.46\std{2.17}$   & $33.93\std{2.24}$   & $31.12\std{2.34}$   & $29.18\std{1.39}$   & $29.93\std{0.16}$   & $27.91\std{2.44}$   & $29.97\std{0.63}$  \\
& TB$_\theta$ (learned var.) &  $18.36\std{0.71}$  & $20.59\std{2.00}$  & $16.34\std{1.75}$  & $21.99\std{6.57}$  & $15.63\std{2.28}$  & $15.14\std{1.33}$  & $14.02\std{0.28}$  & $19.38\std{7.77}$ \\
& TB$_{\theta}$ + TLM$_{\varphi}$ &  $10.02\std{0.23}$ & $12.10\std{2.73}$ & $15.53\std{0.71}$ & $19.84\std{1.53}$ & $18.85\std{0.74}$ & $18.08\std{1.74}$ & $19.64\std{1.49}$ & $20.98\std{1.49}$ \\
& TB$_{\theta,\varphi}$ &  $10.79\std{0.39}$ & $11.47\std{1.38}$ & $11.29\std{1.60}$ & $19.37\std{1.51}$ & $16.45\std{0.79}$ & $16.18\std{0.61}$ & $16.82\std{1.22}$ & $17.83\std{1.77}$
\\
\cmidrule(lr){2-10}
& PIS$_\theta$ (fixed var.) &  $30.44\std{0.07}$   & $30.45\std{0.05}$   & $30.44\std{0.08}$   & $30.43\std{0.07}$   & $30.46\std{0.04}$   & $30.47\std{0.09}$   & $30.46\std{0.05}$   & $30.46\std{0.06}$  \\
& PIS$_\theta$ (learned var.) &  $30.43\std{0.07}$  & $30.37\std{0.14}$  & $30.15\std{0.18}$  & $30.09\std{0.28}$  & $30.09\std{0.21}$  & $30.15\std{0.35}$  & $30.14\std{0.45}$  & $29.73\std{0.60}$ \\
& PIS$_{\theta}$ + TLM$_{\varphi}$ &  $30.22\std{0.35}$ & $30.14\std{0.21}$ & $29.93\std{0.57}$ & $29.92\std{0.07}$ & $29.63\std{0.27}$ & $29.91\std{0.26}$ & $29.85\std{0.17}$ & $29.86\std{0.34}$\\
& PIS$_{\theta}$ + VarGrad$_{\varphi}$ &  $28.33\std{3.00}$ & $30.42\std{0.08}$ & $30.08\std{0.25}$ & $30.06\std{0.50}$ & $30.46\std{0.06}$ & $30.15\std{0.49}$ & $30.22\std{0.24}$ & $30.16\std{0.39}$\\
\cmidrule(lr){2-10}
& Ground Truth & \multicolumn{6}{c}{$3.95\std{0.51}$}
\\\bottomrule
\end{tabular}
}

    \label{tab:40gmm_table}
\end{table}

\begin{table}[]
    \centering
    \caption{Comparison of 4 algorithms by ELBO, EUBO, and 2-Wasserstein between generated and ground truth samples  with varying number of discretisation steps on Easy Funnel and Hard Funnel. Mean and std over 3 runs are specified.
    }
    ELBO  ($\uparrow$)
    \resizebox{\linewidth}{!}{
\begin{tabular}{@{}llcccccc}
% & & \multicolumn{6}{c}{ELBO $\uparrow$} \\
\toprule
Energy $\downarrow$ & Method $\downarrow$ Steps $\rightarrow$ & $T = 5$ & $T = 10$ & $T = 15$ & $T = 20$ & $T = 25$ & $T = 30$ \\
\midrule
\multirow{4}{4em}{Easy Funnel} 
& TB$_\theta$ (fixed var.) & $-0.61\std{0.00}$ &  $-0.34\std{0.02}$ &  $-0.23\std{0.02}$ &  $-0.19\std{0.01}$ &  $-0.15\std{0.00}$ &  $-0.14\std{0.00}$
\\
& TB$_\theta$ (learned var.) & $-0.34\std{0.01}$ &  $-0.17\std{0.01}$ &  $-0.11\std{0.00}$ &  $-0.08\std{0.01}$ &  $-0.07\std{0.01}$ &  $-0.07\std{0.01}$
\\
& TB$_\theta$ + TLM$_\varphi$ & 
$-0.08\std{0.04}$ & $-0.11\std{0.01}$ & diverged & diverged & diverged & diverged 
\\
& TB$_{\theta,\varphi}$ & $-0.15\std{0.01}$ &  $-0.11\std{0.01}$ &  $-0.08\std{0.00}$ &  $-0.06\std{0.01}$ &  $-0.05\std{0.00}$ &  $-0.05\std{0.00}$
\\ 
\midrule
\multirow{4}{4em}{Hard Funnel} 
& TB$_\theta$ (fixed var.) & $-1.68\std{0.02}$ &  $-1.17\std{0.03}$ &  $-0.98\std{0.01}$ &  $-0.89\std{0.02}$ &  $-0.79\std{0.00}$ &  $-0.76\std{0.01}$ 
\\
& TB$_\theta$ (learned var.) & $-1.72\std{0.04}$ &  $-1.25\std{0.03}$ &  $-0.91\std{0.05}$ &  $-0.61\std{0.01}$ &  $-0.48\std{0.02}$ &  $-0.44\std{0.01}$
\\
&  TB$_\theta$ + TLM$_\varphi$ & $-0.63\std{0.26}$ & $-0.45\std{0.04}$ & diverged & diverged & diverged & diverged
\\
& TB$_{\theta,\varphi}$ & $-1.01\std{0.04}$ &  $-0.69\std{0.01}$ &  $-0.56\std{0.03}$ &  $-0.55\std{0.08}$ &  $-0.41\std{0.02}$ &  $-0.41\std{0.03}$
\\
\bottomrule
\end{tabular}
}

    EUBO ($\downarrow$)
    \resizebox{\linewidth}{!}{
\begin{tabular}{@{}llcccccc}
% & & \multicolumn{6}{c}{ELBO $\uparrow$} \\
\toprule
Energy $\downarrow$ & Method $\downarrow$ Steps $\rightarrow$ & $T = 5$ & $T = 10$ & $T = 15$ & $T = 20$ & $T = 25$ & $T = 30$ \\
\midrule
\multirow{4}{4em}{Easy Funnel} 
& TB$_\theta$ (fixed var.) & $0.73\std{0.01}$ &  $0.40\std{0.02}$ &  $0.28\std{0.01}$ &  $0.21\std{0.01}$ &  $0.17\std{0.00}$ &  $0.15\std{0.00}$
\\
& TB$_\theta$ (learned var.) & $0.38\std{0.02}$ &  $0.18\std{0.01}$ &  $0.12\std{0.00}$ &  $0.09\std{0.00}$ &  $0.08\std{0.00}$ &  $0.06\std{0.00}$ 
\\
& TB$_\theta$ + TLM$_\varphi$ & 
$0.08\std{0.03}$ & $0.12\std{0.01}$ & diverged & diverged & diverged & diverged 
\\
& TB$_{\theta,\varphi}$ & $0.17\std{0.03}$ &  $0.12\std{0.01}$ &  $0.08\std{0.00}$ &  $0.07\std{0.01}$ &  $0.05\std{0.00}$ &  $0.05\std{0.00}$
\\ 
\midrule
\multirow{4}{4em}{Hard Funnel} 
& TB$_\theta$ (fixed var.) & $95.36\std{8.41}$ &  $78.77\std{8.35}$ &  $75.40\std{3.35}$ &  $72.83\std{5.15}$ &  $70.99\std{4.26}$ &  $68.93\std{5.01}$
\\
& TB$_\theta$ (learned var.) & $155.51\std{19.60}$ &  $102.97\std{41.86}$ &  $105.67\std{6.26}$ &  $121.76\std{41.45}$ &  $349.17\std{406.05}$ &  $79.37\std{33.30}$ 
\\
&  TB$_\theta$ + TLM$_\varphi$ & $1509.44\std{2037.59}$ & $22.77\std{14.77}$ & diverged & diverged & diverged & diverged
\\
& TB$_{\theta,\varphi}$ & $2800.63\std{1351.92}$ &  $21.48\std{13.09}$ &  $62.95\std{75.95}$ &  $30.35\std{20.62}$ &  $8.60\std{0.97}$ &  $17.29\std{8.63}$
\\\bottomrule
\end{tabular}
}

    2-Wasserstein ($\downarrow$)
    \resizebox{\linewidth}{!}{
\begin{tabular}{@{}llcccccc}
% & & \multicolumn{6}{c}{ELBO $\uparrow$} \\
\toprule
Energy $\downarrow$ & Method $\downarrow$ Steps $\rightarrow$ & $T = 5$ & $T = 10$ & $T = 15$ & $T = 20$ & $T = 25$ & $T = 30$ \\
\midrule
\multirow{4}{5em}{Easy Funnel} 
& TB$_\theta$ (fixed var.) & $2.45\std{0.04}$ &  $2.43\std{0.03}$ &  $2.45\std{0.04}$ &  $2.49\std{0.01}$ &  $2.47\std{0.02}$ &  $2.48\std{0.03}$
\\
& TB$_\theta$ (learned var.) & $2.49\std{0.02}$ &  $2.50\std{0.03}$ &  $2.49\std{0.03}$ &  $2.49\std{0.02}$ &  $2.50\std{0.02}$ &  $2.53\std{0.01}$
\\
& TB$_\theta$ + TLM$_\varphi$ & 
$2.51\std{0.05}$ & $2.53\std{0.02}$ & diverged & diverged & diverged & diverged 
\\
& TB$_{\theta,\varphi}$ & $2.52\std{0.01}$ &  $2.49\std{0.02}$ &  $2.51\std{0.03}$ &  $2.53\std{0.02}$ &  $2.52\std{0.02}$ &  $2.50\std{0.02}$ \\
 \cmidrule(lr){2-8}
& Ground Truth & \multicolumn{6}{c}{$2.58\std{0.05}$}
\\ 
\midrule
\multirow{4}{5em}{Hard Funnel} 
& TB$_\theta$ (fixed var.) & $22.82\std{0.73}$ &  $22.50\std{0.75}$ &  $22.36\std{0.78}$ &  $22.17\std{0.81}$ &  $22.08\std{0.79}$ &  $21.99\std{0.79}$
\\
& TB$_\theta$ (learned var.) & $22.45\std{0.79}$ &  $21.94\std{0.85}$ &  $21.63\std{0.80}$ &  $21.53\std{0.76}$ &  $21.44\std{0.81}$ &  $21.36\std{0.81}$
\\
&  TB$_\theta$ + TLM$_\varphi$ & $21.32\std{0.90}$ & $21.04\std{0.84}$ & diverged & diverged & diverged & diverged
\\
& TB$_{\theta,\varphi}$ & $21.48\std{0.81}$ &  $20.96\std{0.88}$ &  $21.04\std{0.91}$ &  $20.91\std{0.86}$ &  $21.03\std{0.80}$ &  $21.01\std{0.90}$
\\
 \cmidrule(lr){2-8}
& Ground Truth & \multicolumn{6}{c}{$24.24\std{4.20}$}
\\\bottomrule
\end{tabular}
}

    \label{tab:funnel_table}
\end{table}

\begin{table}[]
    \centering
    \caption{Comparison of 4 algorithms by ELBO gap, EUBO gap, and 2-Wasserstein between generated and ground truth samples  with varying number of discretisation steps on Manywell and Distorted Manywell. Mean and std over 3 runs are specified.
    }
    ELBO gap ($\uparrow$) \\
    \resizebox{0.8\linewidth}{!}{
\begin{tabular}{@{}llccc}
% & & \multicolumn{6}{c}{ELBO $\uparrow$} \\
\toprule
Energy $\downarrow$ & Method $\downarrow$ Steps $\rightarrow$ & $T = 5$ & $T = 10$ & $T = 20$ \\
\midrule
\multirow{4}{4em}{ManyWell} 
& TB$_{\theta}$ (fixed var.) & $-36.25\std{13.68}$ & $-12.36\std{0.10}$ & $-5.67\std{0.05}$ \\
& TB$_{\theta}$ (learned var.) & $-2.40\std{0.03}$ & $-0.78\std{0.02}$ & $-0.41\std{0.09}$ \\
& TB$_{\theta}$ + TLM$_{\varphi}$ & $-2.16\std{0.12}$ & $-0.79\std{0.01}$ & diverged \\
& TB$_{\theta, \varphi}$ & $-2.22\std{0.05}$ & $-0.83\std{0.02}$ & $-0.32\std{0.01}$
\\ 
\midrule
\multirow{4}{4em}{Distorted ManyWell} 
& TB$_{\theta}$ (fixed var.) & $-77.74\std{6.27}$ & $-25.75\std{9.45}$ & $-11.96\std{0.16}$ \\
& TB$_{\theta}$ (learned var.) & $-8.36\std{0.73}$ & $-5.70\std{1.04}$ & $-3.83\std{0.13}$ \\
& TB$_{\theta}$ + TLM$_{\varphi}$ & $-6.47\std{1.12}$ & $-3.99\std{2.31}$ & diverged \\
& TB$_{\theta, \varphi}$ & $-7.33\std{1.34}$ & $-4.07\std{2.38}$ & $-2.90\std{1.60}$
\\
\bottomrule
\end{tabular}
}

    EUBO gap ($\downarrow$) \\
    \resizebox{0.8\linewidth}{!}{
{
\begin{tabular}{@{}llccc}
% & & \multicolumn{6}{c}{ELBO $\uparrow$} \\
\toprule
Energy $\downarrow$ & Method $\downarrow$ Steps $\rightarrow$ & $T = 5$ & $T = 10$ & $T = 20$ \\
\midrule
\multirow{4}{4em}{ManyWell} 
& TB$_{\theta}$ (fixed var.) & $12.80\std{3.02}$ & $6.72\std{0.02}$ & $4.06\std{0.04}$ \\
& TB$_{\theta}$ (learned var.) & $1.68\std{0.02}$ & $0.63\std{0.01}$ & $0.36\std{0.09}$ \\
& TB$_{\theta}$ + TLM$_{\varphi}$ & $1.69\std{0.01}$ & $0.72\std{0.02}$ & diverged \\
& TB$_{\theta, \varphi}$ & $1.61\std{0.01}$ & $0.64\std{0.01}$ & $0.31\std{0.01}$ 
\\
\midrule
\multirow{4}{4em}{Distorted ManyWell} 
& TB$_{\theta}$ (fixed var.) & $19.68\std{0.62}$ & $10.46\std{2.30}$ & $6.19\std{0.10} $
\\
& TB$_{\theta}$ (learned var.) & $4.53\std{0.30}$ & $3.16\std{0.37}$ & $2.15\std{0.08}$ \\
& TB$_{\theta}$ + TLM$_{\varphi}$ & $4.09\std{0.49}$ & $2.39\std{1.09}$ & diverged \\
& TB$_{\theta, \varphi}$ & $4.29\std{0.42}$ & $2.38\std{1.19}$ & $1.77\std{0.80}$ 
\\
\bottomrule
\end{tabular}
}
}

    2-Wasserstein ($\downarrow$) \\
    \resizebox{0.8\linewidth}{!}{
\begin{tabular}{@{}llccc}
% & & \multicolumn{6}{c}{ELBO $\uparrow$} \\
\toprule
Energy $\downarrow$ & Method $\downarrow$ Steps $\rightarrow$ & $T = 5$ & $T = 10$ & $T = 20$ \\
\midrule
\multirow{4}{4em}{ManyWell} 
& TB$_{\theta}$ (fixed var.) & $5.57\std{0.11}$ & $5.40\std{0.02}$ & $5.38\std{0.01}$ \\
& TB$_{\theta}$ (learned var.) & $5.35\std{0.01}$ & $5.38\std{0.01}$ & $5.41\std{0.03}$ \\
& TB$_{\theta}$ + TLM$_{\varphi}$ & $5.33\std{0.03}$ & $5.36\std{0.02}$ & diverged \\
& TB$_{\theta, \varphi}$ & $5.36\std{0.02}$ & $5.39\std{0.01}$ & $5.40\std{0.01}$ \\
\cmidrule(lr){2-5} & Ground Truth &  \multicolumn{3}{c}{$5.42\std{0.02}$}
\\
\midrule
\multirow{4}{4em}{Distorted ManyWell} 
& TB$_{\theta}$ (fixed var.) & $5.87\std{0.03}$ & $5.66\std{0.02}$ & $5.53\std{0.06}$ \\
& TB$_{\theta}$ (learned var.) & $5.53\std{0.00}$ & $5.44\std{0.03}$ & $5.40\std{0.01}$ \\
& TB$_{\theta}$ + TLM$_{\varphi}$ & $5.50\std{0.02}$ & $5.42\std{0.09}$  & diverged \\
& TB$_{\theta, \varphi}$ & $5.54\std{0.01}$ & $5.41\std{0.08}$ & $5.37\std{0.06}$ \\
\cmidrule(lr){2-5} & Ground Truth &  \multicolumn{3}{c}{$5.30\std{0.01}$} \\
\bottomrule
\end{tabular}
}

    \label{tab:many_well_rrn2_table}
\end{table}
}

\end{document}